\documentclass{article}
\usepackage{iclr2027_conference,times}

\usepackage{amsmath,amssymb,booktabs,microtype,graphicx,bm}
\usepackage{xcolor}
\usepackage[colorlinks=true,linkcolor={red!60!black},citecolor={blue!60!black},urlcolor={blue!60!black}]{hyperref}
\usepackage{url}
\usepackage{float}
\usepackage{amsthm}
\usepackage{enumitem}
\usepackage{placeins}

\newcommand{\Var}{\operatorname{Var}}
\newcommand{\Cov}{\operatorname{Cov}}
\newcommand{\E}{\mathbb{E}}
\newtheorem{proposition}{Proposition}
\newtheorem{corollary}{Corollary}

\makeatletter
\def\thanks#1{\protected@xdef\@thanks{\@thanks
        \protect\footnotetext[\c@footnote]{#1}}}
\makeatother

\title{Unit-level aggregation breaks coverage \\ in causal foundation models}

\author{Gabe Schulman \\ New York University \\ \texttt{gs4342@nyu.edu} \And Lucius E.J. Bynum\textsuperscript{\dag}  \\ New York University \\ \texttt{lucius@nyu.edu} \And Yoav Wald\textsuperscript{\dag} \\ Technion \\ \texttt{yoav.wald@technion.ac.il}
\thanks{\textsuperscript{\dag}Equal supervision.}
}

\iclrfinalcopy

\begin{document}
\maketitle
\lhead{}\renewcommand{\headrulewidth}{0pt}

\begin{abstract}
Causal foundation models (CFMs) show promising results for treatment-effect estimation, with unit-level estimates that are competitive on standard benchmarks. Many causal analyses, however, inform decisions through effects averaged over groups of units, such as the full sample, the treated units, or a subgroup, and for such decisions a CFM is useful only if its credible intervals are calibrated. We examine these intervals and find that their coverage degrades, both in simple synthetic settings and on standard benchmarks. On ACIC 2016, for example, temperature-calibrated 95\% intervals from CausalPFN \citep{causalpfn2025} fall below nominal coverage beyond groups of 30 out of 4802 units, and intervals for the full-sample average contain the true value in only 20\% of datasets.

We trace this problem to how these models are trained and how their intervals are formed. They output a predictive distribution for one unit at a time and, at inference time, combine the unit-level variances as if the errors on different units were independent. We show formally that posterior uncertainty over the data-generating process makes these errors correlated. As a result, the combined variance is too small by a factor that grows with the size of the group. Empirically, this factor predicts where coverage fails and by how much. We then show that two neural process models, which make joint predictions for a group of units, account for this correlation and substantially improve coverage on synthetic tasks and, as heads on a frozen CausalPFN, on all six benchmarks.
\end{abstract}

\section{Introduction}
\label{sec:intro}

Causal Foundation Models (CFMs) are pre-trained on simulated causal datasets and estimate treatment effects for a new dataset in a single forward pass. Recent examples, including CausalPFN \citep{causalpfn2025}, Do-PFN \citep{dopfn2025}, and CausalFM \citep{causalfm2026}, perform competitively on synthetic and semi-synthetic benchmarks without fitting a new estimator to each dataset. CFMs typically estimate effects at the unit level, through Conditional Average Treatment Effects (CATEs) that describe the average treatment effect as a function of the observed covariates.

Many causal questions, however, concern effects averaged over a group of units, such as the population average treatment effect (PATE), the average treatment effect on the treated (ATT), and subgroup effects. For instance, clinical trials often focus on subgroup estimands \citep{ich2019, ema2019}, drugs have been approved for specific subpopulations after subgroup findings motivated dedicated trials \citep{taylor2004}, and large policy evaluations report prespecified subgroup analyses alongside intervals too wide to rule out clinically meaningful effects \citep{baicker2013}. We call any average of the CATE over a set of units a \emph{subset-CATE} \citep[see, e.g.,][]{e24121782} and define it formally in Section~\ref{sec:setup}. In principle, a CFM can estimate any subset-CATE by averaging its unit-level estimates. One trained model can then target any subgroup without refitting, including the full sample and the treated units, which give analogues of the PATE and ATT.

In applied settings such as medicine, decisions often depend on how uncertain the estimate is, so a CFM is useful for these questions only if its intervals for group averages are calibrated. We show that the intervals current CFMs report for group averages, including the full sample, have no coverage guarantee and fall below nominal on every benchmark we test, and we propose two models that address this problem. Some CFMs output a predictive distribution for each unit \citep{causalpfn2025, causalfm2026}, and the unit-level credible intervals CausalPFN forms from it are conservative on our benchmarks. CausalPFN also returns a full-sample average, which it reports as the ATE, with a 95\% credible interval formed by averaging independent draws from the unit-level predictive distributions after an optional temperature calibration. On ACIC 2016 \citep{dorie2019}, however, this interval contains the true full-sample average in only 8 of the 40 datasets we use when calibrated and in 18 of 40 when not. \citet{mourao2026} report similarly poor ATE coverage on simulated clinical data. Intervals from the same procedure fall below nominal coverage for subsets larger than 11\% of the sample on IHDP \citep{hill2011, shalit2017} and 0.6\% on ACIC 2016 (Figure~\ref{fig:benchmarks}).

\begin{figure}[t]
\centering
\includegraphics[width=0.7\textwidth]{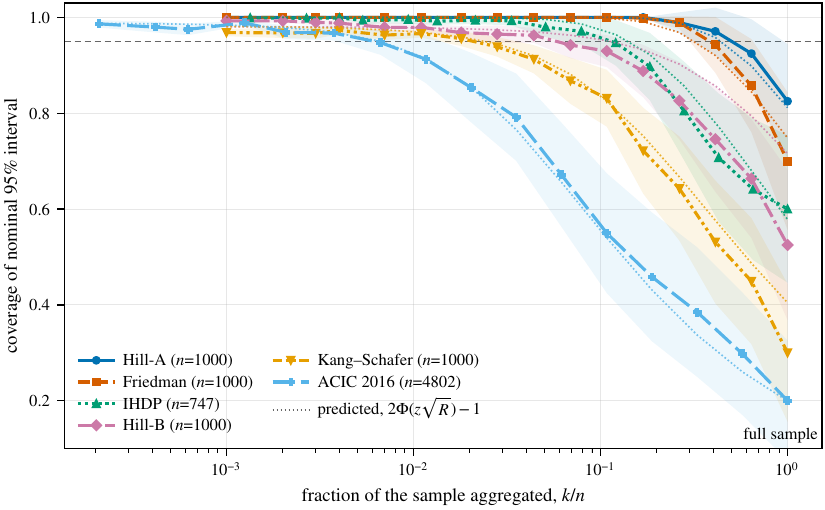}
\caption{Coverage of nominal $95\%$ subset-CATE intervals from temperature-calibrated CausalPFN on standard benchmarks, with unit-level aggregation. Subsets are drawn uniformly at random. The horizontal axis is the fraction of the sample aggregated, and its right edge is the full sample. Thin dotted lines are the prediction of Equation~\eqref{eq:intro-coverage}, with $m_S$ and $\rho_S$ measured once per benchmark and held fixed across $k$, and the horizontal dashed line marks nominal coverage.}
\label{fig:benchmarks}
\end{figure}

Current CFMs form the aggregate interval by summarizing each unit's uncertainty separately and combining the summaries as if the units' errors were independent; we call this \emph{unit-level aggregation}. Because Do-PFN \citep{dopfn2025}, CausalFM \citep{causalfm2026}, and CCPFN \citep{ccpfn2026} provide no aggregate estimator, any interval for a subset-CATE has to be built from their unit-level outputs. When there is uncertainty about the data-generating process, however, a model that overestimates one unit's effect tends to overestimate the others' too. Averaging over units then shrinks only the part of the error that is not shared. For a subset of $k$ units, let $\rho_S$ be the correlation between the model's errors on two distinct units of the subset, taken over tasks (draws of the SCM, the dataset, and the subset; Section~\ref{sec:design}), and let $m_S$ be the ratio of the variance the model reports for a unit to its mean squared error ($m_S = 1$ for calibrated unit-level intervals). Under mean-zero Gaussian errors and a fixed reported variance, a nominal $95\%$ interval formed by unit-level aggregation has coverage
\begin{equation}
\mathcal{C}(k) \;=\; 2\,\Phi\!\left(1.96\,\sqrt{\frac{m_S}{1+(k-1)\rho_S}}\,\right) - 1 .
\label{eq:intro-coverage}
\end{equation}
The denominator is the design effect of survey sampling \citep{kish1965}, applied here to estimation errors. Because it grows linearly in $k$, coverage tends to zero for any fixed $\rho_S > 0$. With over-wide unit-level intervals ($m_S > 1$), coverage still falls, but at larger $k$. On ACIC 2016, for example, CausalPFN has $m_S = 1.63$ and $\rho_S = 0.02$. For these values, Equation~\eqref{eq:intro-coverage} predicts that coverage falls below nominal at 32 units and reaches $20\%$ on the full sample; we observe 30 units and $20\%$.

Once the data are observed, the remaining uncertainty in a unit's effect comes from the structural causal model (SCM) that generated them. The effects of all units depend on that same SCM. Unless the data determine the SCM, the unit effects are dependent a posteriori, and an estimator's errors on different units are in general correlated. The exact posterior represents this dependence. Unit-level aggregation ignores it, even though the model's unit-level intervals reflect the same uncertainty.

Accounting for this correlation requires sampling the unit effects jointly. Let $\tau_i$ be the CATE of unit $i$, $S$ the set of units in the subset, $\zeta$ the parameters of the SCM, and $\pi(\zeta \mid \mathcal{D})$ their posterior given the dataset $\mathcal{D}$. Joint and unit-level sampling both average draws of $(\tau_i)_{i\in S}$ and differ only in where the integral over $\zeta$ is placed. They draw from
\begin{align}
\text{unit-level:}\quad & \prod_{i\in S}\int p(\tau_i \mid \zeta)\, d\pi(\zeta \mid \mathcal{D}), \label{eq:intro-unit}\\
\text{SCM-marginalized:}\quad & \int \prod_{i\in S} p(\tau_i \mid \zeta)\, d\pi(\zeta \mid \mathcal{D}). \label{eq:intro-scm}
\end{align}
Placing the integral inside the product, as unit-level aggregation does, treats the unit effects as independent once the SCM is integrated out. They share the SCM, however, and this induces the error correlation $\rho_S$ in Equation~\eqref{eq:intro-coverage}. \emph{SCM-marginalized aggregation} places the integral outside the product and so accounts for the dependence between units. We approximate it with a latent neural process (NP) \citep{garnelo2018np}, which replaces the SCM with a learned latent variable, and with a direct-decode conditional neural process (CNP) \citep{garnelo2018cnp}, which learns the distribution of the subset-CATE directly.

Our contributions are as follows:
\begin{itemize}[leftmargin=*, itemsep=1pt, topsep=2pt]
\item \textbf{Diagnosis.} We show that unit-level aggregation understates the variance of any subset-CATE, whether over the full sample, the treated units, or a subgroup, by the design effect. Under Gaussian errors with a fixed reported variance, its coverage is given by Equation~\eqref{eq:intro-coverage} (Proposition~\ref{prop:design}).
\item \textbf{Source.} We show that, for the posterior mean, the error covariance is the covariance of unit effects under the posterior over the SCM (Proposition~\ref{prop:source}). It is generally nonzero unless the posterior determines the queried unit effects, and it cannot be computed from unit-level marginals alone.
\item \textbf{Method.} We propose two SCM-marginalized models, a latent NP and a direct-decode CNP, that integrate over the SCM jointly for all units of the subset (Section~\ref{sec:fix}), and train them both on a synthetic family and as heads on a frozen CausalPFN.
\item \textbf{Evidence.} CausalPFN's intervals lose nominal coverage on all six standard benchmarks, including for the full sample. With the weights held fixed, unit-level coverage at the largest subset is at most $0.47$ across $60$ cells, while SCM-marginalized coverage has a median of $0.89$. On the six benchmarks, the two models as heads on a frozen CausalPFN cover the full-sample average in $90\%$ and $89\%$ of $240$ datasets, against $25\%$ and $52\%$ for unit-level aggregation from the same weights and $53\%$ for CausalPFN. Proposition~\ref{prop:design} predicts coverage on the synthetic family to within a median absolute error of $0.05$ (Section~\ref{sec:experiments}).
\end{itemize}

\section{Related work}
\label{sec:related}

\textbf{Inference for aggregate estimands.} Uncertainty quantification for group effects, the problem we describe for CFMs in Section~\ref{sec:intro}, has been studied for non-amortized estimators. These can target group effects while retaining the dependence among units, e.g., by averaging within each posterior draw \citep{hill2011}, as our latent NP does, or by building the aggregate from orthogonal scores \citep{chernozhukov2018, semenova2021, chernozhukov2018gates}. Conformal methods give distribution-free coverage for individual treatment effects \citep{lei2021, alaa2023, memmesheimer2025}, but target a unit's outcome, not a group average. \\ [2pt]
\textbf{Design effects and clustered data.} Correlation within groups inflates the variance of an average by the design effect, which \citet{kish1965} introduced for complex surveys and which also applies to cluster sampling \citep{cochran1977} and cluster-randomized trials \citep{hedges2007}. We apply it to a model's estimation errors, whose correlation comes from the posterior over the SCM, not from the sampling design. \\ [2pt]
\textbf{Uncertainty quantification for amortized inference.} In the amortized prediction setting, prior-data fitted networks (PFNs) are trained on synthetic learning problems drawn from a prior, so that a single forward pass approximates posterior inference \citep{muller2022}, and they are state of the art on tabular data \citep{hollmann2023, hollmann2025}. \citet{nagler2023} studies the statistical properties of PFNs as pre-tuned predictors, analyzing their bias and variance. Later work extends PFNs to full posteriors \citep{reuter2025} and to credible sets for a global functional \citep{ng2025}. Notably, \citet{naglerrugamer2025} note that their pointwise procedure does not provide joint inference across query points, which a subset-CATE interval requires, and leave it to future work. Equations~\eqref{eq:intro-unit} and~\eqref{eq:intro-scm} correspond to the marginal and joint predictive distributions of \citet{wen2021predictions} and \citet{osband2023enn}. Gaussian and autoregressive CNPs model the dependence between target points that standard CNPs omit \citep{markou2022practical, bruinsma2023autoregressive}, and quantile CNPs have been applied to amortized estimation of unit-level and population-level causal estimands \citep{bynum2026computational}. \\ [2pt]
\textbf{Causal foundation models.} PFNs and other amortized methods have been applied to causal effect estimation \citep{nilforoshan2023zero, zhang2023towards, bynum2025black, causalpfn2025, dopfn2025, causalfm2026, ccpfn2026}, where they perform competitively on standard benchmarks. Those that report uncertainty typically do so for single units or, like CausalPFN, aggregate independent unit-level draws, and two recent papers examine their uncertainty for aggregate effects. \citet{mourao2026} evaluate CausalPFN on simulated clinical scenarios and report poor coverage of its $95\%$ credible interval for the ATE, attributing it to estimation bias and inadequate uncertainty quantification. \citet{melnychuk2026} show that a PFN's posterior for an aggregate effect can have a prior-induced bias that does not vanish asymptotically, which precludes frequentist consistency. They also note that a PFN's pointwise predictive distributions do not determine a unique joint posterior and that sampling them independently ignores the dependence between points. They obtain joint draws with martingale posteriors. We quantify the coverage lost by independent aggregation for arbitrary subsets and show that CausalPFN forms its full-sample interval this way. Our latent NP models this dependence through a shared latent variable, and our CNP predicts the subset average directly.

\section{Setup}
\label{sec:setup}

We consider effect estimation under a binary treatment. Each unit $i$ has covariates $X_i \in \mathcal{X}$, a treatment $T_i \in \{0,1\}$, and potential outcomes $Y_i(0), Y_i(1)$, one of which is observed as $Y_i = Y_i(T_i)$. We write the CATE as $\tau(x)=\E[Y(1)-Y(0) \mid X=x]$ and $\tau_i := \tau(X_i)$ for its value at unit $i$.

\textbf{Subset-CATE.} Our target averages these unit-level effects over a queried subset of units. Let $S\subseteq\{1,\dots,n\}$ be the indices of the subset and $k=|S|$ its size. We define
\begin{equation}
\theta_S \;:=\; \frac{1}{k}\sum_{i\in S} \tau_i ,
\label{eq:gate}
\end{equation}
and refer to $\theta_S$ as the subset-CATE.

Because $\theta_S$ averages $\tau(X_i)$ at the realized covariates of the units in $S$, it is a finite-sample estimand---the estimand of \citet{abadieimbens2006} restricted to the units in $S$. It differs from the sample average treatment effect $k^{-1}\sum_{i \in S}(Y_i(1)-Y_i(0))$, which includes unit-level noise, and from the PATE $\E[Y(1)-Y(0)]$.

At $k=1$, $\theta_S$ is the CATE of a single unit. At $S=\{1,\dots,n\}$ it is a sample analogue of the PATE, and the quantity existing CFMs report as the ATE; at $S=\{i:T_i=1\}$ it is the analogous quantity for the ATT.

\textbf{Meta-learning.} CFMs are amortized causal estimators, with recent methods building on the meta-learning tradition of prior-data fitted networks \citep{muller2022, hollmann2023}. Such models are trained on tasks $\mathcal{T}_1,\dots,\mathcal{T}_J$ drawn i.i.d.\ from a task distribution. Each task $\mathcal{T}_j=(\zeta_j,\mathcal{D}_j,S_j)$ consists of a structural causal model (SCM) $\zeta_j\sim\pi_0$, a dataset $\mathcal{D}_j=\{X_i,T_i,Y_i\}_{i=1}^{n_j}$ drawn from the SCM, and a subset $S_j$ of one or more units, drawn given $\mathcal{D}_j$, whose subset-CATE is queried. We write $\tau(x;\zeta)$ for the CATE under SCM $\zeta$ when the dependence matters. Current CFMs typically set $|S_j|=1$, so the training target is the CATE. The model then returns a predictive distribution for a single unit's treatment effect, whose mean and variance we write as $\hat{\tau}_i$ and $\hat{\sigma}_i^2$. Estimating $\theta_S$ for $|S|>1$ with a credible interval requires combining $\hat{\tau}_i$ and $\hat{\sigma}_i^2$ across units in $S$.

\textbf{Uncertainty quantification and coverage.} CFMs that provide uncertainty quantification typically report a credible interval, since they are designed to approximate a posterior distribution under a known prior. We denote the interval for $\theta_S$ as $[\hat{l}_{S},\hat{u}_{S}]$. When it is symmetric about $\hat{\theta}_S$ we write $[\hat{l}_{S},\hat{u}_{S}]=\hat{\theta}_{S}\pm z_{1-\alpha/2}\,\hat{\sigma}_{S}$, with $\hat{\sigma}_{S}$ the standard deviation the model reports for $\theta_S$ and $z_{1-\alpha/2}$ the standard normal quantile ($z_{0.975}\approx1.96$ at our $\alpha=0.05$).

We evaluate credible intervals by their coverage, the fraction of queries where the interval contains the true subset-CATE. Coverage at subset size $k$ is
\begin{equation}
\mathcal{C}(k) \;:=\; \Pr\big(\theta_{S}\in[\hat{l}_{S},\hat{u}_{S}]\big),
\label{eq:coverage-def}
\end{equation}
where the probability is over the SCM $\zeta\sim\pi$, the dataset $\mathcal{D}\mid\zeta$, and the subset $S$ of size $k$, drawn uniformly or by a selection rule. Here $\pi$ is the distribution of evaluation tasks, which can differ from a model's training prior $\pi_0$; for a benchmark, the probability is over its datasets. Appendix~\ref{app:protocol} gives the estimator we use for $\mathcal{C}(k)$ and the task counts behind each reported value.

\textbf{Identification.} Our analysis assumes that all confounders are observed and that positivity holds, so that the subset-CATE is identified. Our synthetic family satisfies both by construction (Appendix~\ref{app:dgps}). The benchmarks have no unobserved confounding, but IHDP and some ACIC 2016 settings have limited overlap, so part of their error can come from extrapolation.

\section{Why unit-level aggregation breaks coverage}
\label{sec:collapse}

\subsection{The design effect}
\label{sec:design}

Let $e_i := \hat\tau_i - \tau_i$ denote the estimator's error at unit $i$. Since $\theta_S$ is an average of unit-level effects, the subset-CATE error is the average of unit-level errors, $\bar e_S = \hat\theta_S - \theta_S = k^{-1}\sum_{i \in S} e_i$. We define two moments of the errors and their ratio, with expectations over tasks $(\zeta,\mathcal{D},S)$ as in Equation~\eqref{eq:coverage-def},
\begin{align}
  \sigma_S^2 &\;=\; \E\Big[\frac{1}{k}\sum_{i \in S}e_i^2\Big], &
  c_S &\;=\; \E\Big[\frac{1}{k(k-1)}\sum_{\substack{i,j \in S \\ i \neq j}} e_i e_j\Big], &
  \rho_S &\;=\; \frac{c_S}{\sigma_S^2}.
  \label{eq:moments}
\end{align}
We use raw second moments because coverage depends on the total error, including any bias in unit-level estimates. We define the moments over the queried subset because a query that selects units on a covariate (e.g., patients aged 40--60) may have more strongly correlated errors than the dataset as a whole. When $S$ is drawn uniformly, $\sigma_S^2$, $c_S$, and $m_S$ equal their values for the full sample and so do not depend on $k$. Appendix~\ref{app:subgroups} shows how $\rho_S$ and $m_S$ change for selected subgroups.

Existing CFMs combine the unit-level variances as if the errors were independent, which gives the reported variance
\begin{equation}
  \hat\sigma^2_{S} \;=\; \frac{1}{k^2}\sum_{i \in S}\hat\sigma_i^2 .
  \label{eq:sigma}
\end{equation}
CausalPFN's aggregate estimator averages independent draws from the units' predictive distributions, which makes it a Monte Carlo version of Equation~\eqref{eq:sigma} (Appendix~\ref{app:causalpfn-equiv}).\footnote{\path{src/causalpfn/causal_estimator.py:517} and \path{src/causalpfn/models/icl_model.py:61} of \citet{causalpfn_repo}, at commit \texttt{7da4afa}.}

\begin{proposition}[Design effect; cf.\ \citealt{kish1965}]
\label{prop:design}
Let $\hat\theta_S = k^{-1}\sum_{i\in S}\hat\tau_i$, with expectations over $\zeta$, $\mathcal{D}$, and $S$ as in Equation~\eqref{eq:coverage-def}. The mean squared error of $\hat\theta_S$ is
\begin{equation}
  \E\big[\bar e_S^2\big]
  \;=\; \frac{1}{k^2}\,\E\Big[\,\sum_{i \in S}e_i^2 \;+\; \sum_{\substack{i,j \in S \\ i \neq j}}e_i e_j\,\Big]
  \;=\; \frac{\sigma_S^2}{k}\Big[\,1 + (k-1)\rho_S\,\Big].
  \label{eq:design}
\end{equation}
If the interval is $\hat\theta_S \pm z_{1-\alpha/2}\hat\sigma_S$ with $\hat\sigma_S^2$ from Equation~\eqref{eq:sigma}, then the ratio of reported to actual squared error and (when $\hat\sigma_S$ does not vary across queries) the coverage are
\begin{align}
  R \;&:=\; \frac{\E[\hat\sigma^2_{S}]}{\E[\bar e_S^2]}
  \;=\; \frac{m_S}{1 + (k-1)\rho_S}, \label{eq:bias}\\
  \mathcal{C}(k) \;&=\; \Pr\!\left(\frac{|\bar e_S|}{\sqrt{\E[\bar e_S^2]}} \;\le\; z_{1-\alpha/2}\sqrt{R}\,\right), \label{eq:coverage}
\end{align}
where $m_S := \E[\bar{\hat\sigma}_S^2]/\sigma_S^2$ and $\bar{\hat\sigma}_S^2 = k^{-1}\sum_{i \in S}\hat\sigma_i^2$ is the average variance the model reports for a unit in $S$. If, in addition, $\bar e_S$ is mean-zero Gaussian, then $\mathcal{C}(k) = 2\Phi(z_{1-\alpha/2}\sqrt{R})-1$.
\end{proposition}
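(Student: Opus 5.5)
The plan is to expand the square of the averaged error, then match each piece to the moments in Equation~\eqref{eq:moments}. First write $\bar e_S = k^{-1}\sum_{i\in S} e_i$, so that
\[
\bar e_S^2 = k^{-2}\Big(\sum_{i\in S}e_i^2 + \sum_{i\neq j} e_ie_j\Big).
\]
Taking expectations over $(\zeta,\mathcal{D},S)$ gives the first equality in Equation~\eqref{eq:design}. For the second, rewrite the inner sums using the normalizations in Equation~\eqref{eq:moments}. The diagonal sum equals $k\cdot\frac1k\sum e_i^2$, so its expectation is $k\sigma_S^2$. The off-diagonal sum has $k(k-1)$ terms and equals $k(k-1)$ times the normalized average, so its expectation is $k(k-1)c_S = k(k-1)\rho_S\sigma_S^2$. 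Dividing by $k^2$ yields $\frac{\sigma_S^2}{k}[1+(k-1)\rho_S]$. Since $k$ is fixed (the subset size is fixed across queries at size $k$), linearity of expectation is the only tool needed. I will note that $\rho_S$ is well defined only when $\sigma_S^2>0$, and the $k=1$ case is trivial.

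Next, for Equation~\eqref{eq:bias}, I use Equation~\eqref{eq:sigma}:
\[
\hat\sigma_S^2 = k^{-2}\sum_{i\in S}\hat\sigma_i^2 = k^{-1}\bar{\hat\sigma}_S^2,
\]
so $\E[\hat\sigma_S^2] = k^{-1}\E[\bar{\hat\sigma}_S^2] = k^{-1} m_S\sigma_S^2$ by the definition of $m_S$. Dividing by Equation~\eqref{eq:design} cancels both $\sigma_S^2$ and $1/k$, which gives $R = m_S/[1+(k-1)\rho_S]$.

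For the coverage statement, I use that $\theta_S\in\hat\theta_S\pm z_{1-\alpha/2}\hat\sigma_S$ exactly when $|\bar e_S|\le z_{1-\alpha/2}\hat\sigma_S$. The key step, and the only delicate one, is the hypothesis that $\hat\sigma_S$ does not vary across queries. Under it, $\hat\sigma_S^2$ equals its expectation $\E[\hat\sigma_S^2] = R\,\E[\bar e_S^2]$. Substituting and dividing both sides by $\sqrt{\E[\bar e_S^2]}$ gives Equation~\eqref{eq:coverage}. I will remark that without constancy the event mixes the distributions of $\hat\sigma_S$ and $\bar e_S$, and only the ratio $R$ of expectations survives, which is why the assumption is stated.

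Finally, if $\bar e_S\sim N(0,v)$, then its variance equals its raw second moment, $v=\E[\bar e_S^2]$. The standardized error is therefore standard normal, and $\Pr(|Z|\le c) = 2\Phi(c)-1$ with $c = z_{1-\alpha/2}\sqrt R$. Substituting $\alpha=0.05$ then recovers Equation~\eqref{eq:intro-coverage}.
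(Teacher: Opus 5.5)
Your proof is correct and follows the paper's argument essentially step for step. You expand $k^2\bar e_S^2$ into diagonal and cross terms and match them to $\sigma_S^2$ and $c_S$, divide $\E[\hat\sigma_S^2]=k^{-1}m_S\sigma_S^2$ by the mean squared error to get $R$, and use the constancy of $\hat\sigma_S$ to replace it by $\sqrt{R\,\E[\bar e_S^2]}$ before standardizing, which yields the Gaussian case directly.
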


\emph{Proof sketch} (Appendix~\ref{app:proofs}). Expanding $\bar e_S^2$ gives $k$ diagonal terms, which contribute $\sigma_S^2$ on average, and $k(k-1)$ cross terms, which contribute $c_S$; the rest follows by dividing Equation~\eqref{eq:sigma} by Equation~\eqref{eq:design} and standardizing the error.

The standardized error in Equation~\eqref{eq:coverage} has second moment one regardless of $k$, $\rho_S$, and $m_S$. As a result, once the shape of its distribution is fixed, these quantities affect coverage only through $R$. Under Gaussian errors, the interval attains nominal coverage at $R=1$, and for $R<1$ coverage is below nominal and increasing in $R$. For $\rho_S > 0$, setting $R=1$ gives the largest subset for which the reported interval is wide enough,
\begin{equation}
  k^* \;=\; 1 + \frac{m_S - 1}{\rho_S}.
  \label{eq:kstar}
\end{equation}
Calibrated unit-level intervals ($m_S = 1$) give $k^* = 1$, so nominal coverage holds for single units only. Narrower ones ($m_S < 1$) attain nominal coverage at no subset size under Gaussian errors. Wider ones attain it up to $k^*$, with $k^*-1$ inversely proportional to $\rho_S$.

Equations~\eqref{eq:design} and~\eqref{eq:bias} hold for any error distribution. When the reported variance varies across queries, Equation~\eqref{eq:intro-coverage} is an approximation, since $R$ is a ratio of expectations (Appendix~\ref{app:proofs}); we test it in Section~\ref{sec:designeffect}.

\subsection{The source of the error correlation}
\label{sec:definetti}

For the posterior mean, the correlation $\rho_S$ in Proposition~\ref{prop:design} comes entirely from posterior uncertainty over the SCM. Write $Z_i = (X_i, T_i, Y_i)$ for the observed record of unit $i$. Given the SCM, the units of a task are i.i.d. The SCM itself is random, however, so the joint distribution of the units is a mixture over SCMs,
\begin{equation}
p(Z_1,\dots,Z_n) = \int \prod_{i=1}^{n} p(Z_i \mid \zeta)\, d\pi(\zeta),
\label{eq:definetti}
\end{equation}
which does not factorize across units. Because observing one unit is informative about $\zeta$, and hence about the others, the units are exchangeable but not independent. De Finetti's theorem \citep{definetti1937}, in the general form of \citet{hewittsavage1955}, shows that any infinitely exchangeable sequence admits such a representation for some random measure. For CFMs, that measure is the observational distribution of the SCM, and under the assumptions of Section~\ref{sec:setup} each $\tau(X_i;\zeta)$ is a functional of it. A CFM that is permutation invariant in its context treats its data as exchangeable.

\begin{proposition}[Source of the correlation]
\label{prop:source}
For any estimator whose outputs are functions of $\mathcal{D}$,
\begin{equation}
  \Cov\big(e_i, e_j \mid \mathcal{D}\big) \;=\; \Cov_{\zeta\sim\pi(\zeta\mid\mathcal{D})}\big(\tau(X_i;\zeta),\, \tau(X_j;\zeta)\big).
  \label{eq:cS-definetti}
\end{equation}
If $S$ is conditionally independent of the unit effects given $\mathcal{D}$, then $c_S$ is the average, over $\mathcal{D}$, $S$, and pairs of distinct units in $S$, of this covariance plus $b_ib_j$, where $b_i := \E[e_i \mid \mathcal{D}]$. For the posterior mean under $\pi$, $b_i = 0$.
\end{proposition}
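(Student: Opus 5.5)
The plan is to condition on $\mathcal{D}$ throughout and use the fact that the estimator's outputs are $\mathcal{D}$-measurable. Fix a dataset $\mathcal{D}$. Given $\mathcal{D}$, the only remaining randomness in the task is the SCM $\zeta$, distributed as $\pi(\zeta\mid\mathcal{D})$, and the subset $S$. The covariates $X_i$ are part of $\mathcal{D}$, so $\tau_i=\tau(X_i;\zeta)$ is random only through $\zeta$. Since $\hat\tau_i$ is a function of $\mathcal{D}$, it is a constant under this conditioning. Write $e_i=\hat\tau_i-\tau(X_i;\zeta)$. Subtracting a constant leaves covariances unchanged, so
\[
\Cov(e_i,e_j\mid\mathcal{D})=\Cov_{\zeta\sim\pi(\zeta\mid\mathcal{D})}\big(\tau(X_i;\zeta),\tau(X_j;\zeta)\big),
\]
which is Equation~\eqref{eq:cS-definetti}. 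The two minus signs cancel, so no sign needs to be tracked.

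For the statement about $c_S$, I will use the identity $\E[e_ie_j\mid\mathcal{D}]=\Cov(e_i,e_j\mid\mathcal{D})+b_ib_j$ with $b_i=\E[e_i\mid\mathcal{D}]$. The assumption that $S\perp(\tau_i)_i\mid\mathcal{D}$ gives the same identity conditionally on $(\mathcal{D},S)$ as well. That is, for $i\neq j\in S$,
\[
\E[e_ie_j\mid\mathcal{D},S]=\Cov_{\pi(\zeta\mid\mathcal{D})}\big(\tau_i,\tau_j\big)+b_ib_j .
\]
Here $b_i$ is still computed under $\pi(\zeta\mid\mathcal{D})$, because conditioning on $S$ does not change the law of $\zeta$. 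I will then write $c_S=\E\big[\E[\tfrac{1}{k(k-1)}\sum_{i\neq j\in S}e_ie_j\mid\mathcal{D},S]\big]$ by the tower property. The inner sum is over the indices fixed by $S$, so linearity gives the stated average, over $\mathcal{D}$, $S$, and pairs of distinct units in $S$, of the covariance plus $b_ib_j$.

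The main obstacle is bookkeeping about the measure being used. The expectations defining $c_S$ are over the evaluation distribution $\pi$, so the posterior in Equation~\eqref{eq:cS-definetti} must be the posterior under that same $\pi$ for the tower step to be valid. I will state this explicitly and also require $k\ge2$ so that $c_S$ is defined.

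Finally, for the posterior mean under $\pi$, we have $\hat\tau_i=\E_{\pi(\zeta\mid\mathcal{D})}[\tau(X_i;\zeta)]$. Then $b_i=\hat\tau_i-\E[\tau_i\mid\mathcal{D}]=0$ directly, and $c_S$ reduces to the averaged posterior covariance.
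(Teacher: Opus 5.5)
Your proposal is correct and follows the paper's proof essentially step for step: the constant-shift argument for Equation~\eqref{eq:cS-definetti}, then $\E[e_ie_j\mid\mathcal{D},S]=\Cov(e_i,e_j\mid\mathcal{D})+b_ib_j$ from selection on the data, averaged over pairs and $(\mathcal{D},S)$. One wording fix: the hypothesis guarantees only that conditioning on $S$ leaves the law of the unit effects unchanged, not the law of $\zeta$ (selection on the true propensity, for example, changes the posterior of $w_p$), but that weaker fact is all your argument actually uses.
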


\emph{Proof sketch} (Appendix~\ref{app:proofs}). Given $\mathcal{D}$, the outputs $\hat\tau_i$ are fixed; given $\zeta$ as well, each $\tau_i$ is fixed. Hence $\Cov(e_i, e_j \mid \zeta, \mathcal{D}) = 0$, and the law of total covariance leaves only the covariance of $\tau_i$ and $\tau_j$ across posterior draws of $\zeta$. Averaging $\E[e_ie_j \mid \mathcal{D}] = \Cov(e_i,e_j \mid \mathcal{D}) + b_ib_j$ over $\mathcal{D}$, $S$, and pairs gives $c_S$. The term $b_ib_j$ is positive when the estimator errs in the same direction at units $i$ and $j$.

Equation~\eqref{eq:intro-scm} includes this covariance, and Equation~\eqref{eq:intro-unit} sets it to zero. The unit-level marginals $p(\tau_i \mid \mathcal{D})$ do not determine it (Appendix~\ref{app:proofs}), so recovering it requires a joint model of the unit effects or, as in martingale posteriors \citep{ng2025, melnychuk2026}, repeated forward passes.

\subsection{SCM-marginalized aggregation}
\label{sec:fix}

The posterior variance of $\theta_S$ averages the covariance of Proposition~\ref{prop:source} over all $k^2$ pairs of units,
\begin{equation}
  \Var\big(\theta_S \mid \mathcal{D}\big) \;=\; \frac{1}{k^2}\sum_{i\in S}\sum_{j\in S}\Cov_{\zeta\sim\pi(\zeta\mid\mathcal{D})}\big(\tau(X_i;\zeta),\,\tau(X_j;\zeta)\big).
  \label{eq:postvar}
\end{equation}
In practice, both distributions in Equations~\eqref{eq:intro-unit} and~\eqref{eq:intro-scm} are approximated by Monte Carlo sampling. The two schemes differ in whether the unit effects in one draw are sampled independently or computed under one shared draw of the SCM. Let $\tilde\theta_S = k^{-1}\sum_{i\in S}\tilde\tau_i$ be one draw of the subset-CATE and $\tilde\zeta \sim \pi(\zeta \mid \mathcal{D})$ one draw of the SCM. Conditional on $\mathcal{D}$, the variance of one draw is
\begin{align}
  \text{unit-level:}\quad & \tilde\tau_i \overset{\text{ind.}}{\sim} p(\tau_i \mid \mathcal{D}),
  & \Var\big(\tilde\theta_S \mid \mathcal{D}\big) &= \tfrac{1}{k^2}\textstyle\sum_{i\in S}\Var(\tau_i \mid \mathcal{D}), \label{eq:unitdraw}\\
  \text{SCM-marginalized:}\quad & \tilde\tau_i = \tau(X_i;\tilde\zeta),
  & \Var\big(\tilde\theta_S \mid \mathcal{D}\big) &= \Var(\theta_S \mid \mathcal{D}). \label{eq:scmdraw}
\end{align}
The first is what CausalPFN's aggregate estimator computes, and it omits the cross terms of Equation~\eqref{eq:postvar}. The second includes them, because the units in a draw share one SCM. For the exact posterior under $\pi$, the first gives $R = 1/\big(1+(k-1)\rho_S\big)$ in Proposition~\ref{prop:design}, and the second gives $R = 1$ and exact coverage at every $k$ (Corollary~\ref{cor:exact}). Since no CFM outputs draws of $\zeta$, the models below approximate SCM-marginalized aggregation without them.

\paragraph{Latent NP.} A latent NP \citep{garnelo2018np} replaces the SCM with a learned latent variable $z$. An encoder maps the context to a distribution $q_\psi(z \mid \mathcal{D})$, and a decoder maps $z$, the context, and a unit's covariates to a distribution over that unit's effect. The model's joint distribution over the unit effects is then
\begin{equation}
  p_\phi\big((\tau_i)_{i\in S} \mid \mathcal{D}\big) \;=\; \int \prod_{i\in S} p_\phi(\tau_i \mid z, X_i, \mathcal{D})\, q_\psi(z \mid \mathcal{D})\, dz ,
  \label{eq:latentnp}
\end{equation}
which has the SCM-marginalized form of Equation~\eqref{eq:intro-scm} with $z$ in place of $\zeta$. The model is trained on two likelihood terms estimated with draws of $z$ from $q_\psi(z \mid \mathcal{D})$, with no KL term \citep[cf.][]{foong2020convnp}: one for each unit's effect and one for $\theta_S$ on a random subset $S$, averaging the decoded unit effects within each draw (Appendix~\ref{app:protocol}). Because the decoder treats units as independent given $z$, any dependence between units that the model learns must be captured by $z$. The subset-level term trains this dependence directly, since for large subsets the variance the model assigns to $\theta_S$ comes mainly from how the within-draw averages vary across draws. At inference, we sample one $z$ per Monte Carlo draw, decode every unit of $S$ from it, and average, which implements Equation~\eqref{eq:scmdraw} with $z$ in place of $\zeta$. Uncertainty in the per-unit decoder given $z$ is still combined as in unit-level aggregation.

\paragraph{Direct-decode CNP.} The CNP \citep{garnelo2018cnp} takes $S$ as a mask in the forward pass and outputs a Gaussian $q_\phi(\theta_S \mid \mathcal{D}, S)$. When it is trained to minimize $\E[-\log q_\phi(\theta_S \mid \mathcal{D}, S)]$ over uniformly drawn subsets of all sizes, the best Gaussian output has the mean and variance of the posterior $\int p(\theta_S \mid \zeta)\,d\pi_0(\zeta \mid \mathcal{D})$ under its training prior $\pi_0$ (Proposition~\ref{prop:cnp}). It thus reports the variance of Equation~\eqref{eq:postvar}, cross terms included, without sampling. Because $S$ is an input, the model need not be retrained for new subsets.

\section{Experiments}
\label{sec:experiments}

\subsection{Setup}
\label{sec:expsetup}

\begin{table}[t]
\centering
\footnotesize
\setlength{\tabcolsep}{5pt}
\begin{tabular}{lllrr}
\toprule
\textbf{dataset} & \textbf{covariates} & \textbf{source} & \textbf{$n$} & \textbf{$d$} \\
\midrule
Synthetic family & simulated $\mathcal{N}(0, I)$ & Appendix~\ref{app:dgps} & 500 & 3 \\
Hill-A & simulated $\mathcal{N}(0, I)$ & \citet{hill2011}, setting A & 1000 & 25 \\
Friedman & simulated $\mathrm{U}(0,1)$ & \citet{friedman1991}, confounded & 1000 & 10 \\
IHDP & real, IHDP study & \citet{hill2011}; splits of \citet{shalit2017} & 747 & 25 \\
Hill-B & simulated $\mathcal{N}(0, I)$ & \citet{hill2011}, setting B & 1000 & 25 \\
Kang--Schafer & simulated, nonlinear in 4 latents & \citet{kang2007} & 1000 & 4 \\
ACIC 2016 & real, competition covariate table & \citet{dorie2019} & 4802 & 58 \\
\bottomrule
\end{tabular}
\caption{Datasets used in our experiments. Our models are trained and evaluated on the synthetic family, our heads on CausalPFN are evaluated on the six benchmarks, and CausalPFN is evaluated on all seven. Hill-A, Hill-B, Friedman, and Kang--Schafer are our reimplementations (Appendix~\ref{app:dgps}). $d$ is the number of covariates supplied to the estimator.}
\label{tab:benchmarks}
\end{table}

\paragraph{Datasets.} We train and evaluate our own models on a synthetic family of SCMs (Appendix~\ref{app:dgps}). Each task draws a new SCM with three covariates and $n=500$ units, and $\theta_S$ has a closed form for every subset. A sharpness parameter $s$ controls how far propensity scores are pushed toward 0 and 1; we use $s=1$ and $s=4$ for our models and $s \in \{1,2,4,6\}$ for CausalPFN. CausalPFN is evaluated on the same family and on six standard benchmarks (Table~\ref{tab:benchmarks}), and our heads on CausalPFN on the six benchmarks. Because all of them have simulated response surfaces, we score intervals against each unit's known conditional effect (Appendix~\ref{app:dgps}).

\paragraph{Models.} CausalPFN is used as released, without retraining, with intervals formed by its own aggregate estimator. On the benchmarks we enable its built-in per-dataset temperature calibration; on the synthetic family we use its default, which leaves calibration off. Our CNP and latent NP are trained at $s=1$, with three seeds each (Appendix~\ref{app:protocol}). To evaluate SCM-marginalized aggregation on the benchmarks, we also train both models as heads on the frozen CausalPFN, on a broader prior with $n$ between 100 and 5000 and up to 60 covariates, with three seeds each. Each head reads CausalPFN's last hidden states and its unit-level effect mean and standard deviation (Appendix~\ref{app:protocol}). No benchmark is used for training or checkpoint selection.

\paragraph{Experiments.} The \emph{benchmark} experiment applies CausalPFN to each benchmark, querying uniformly drawn subsets of increasing size on 40 datasets per benchmark. The \emph{fixed-weight} experiment holds a trained CNP or latent NP fixed and, for the same subset, forms one interval by unit-level aggregation, as in Equation~\eqref{eq:intro-unit}, and one by SCM-marginalized aggregation, as in Equation~\eqref{eq:intro-scm}. Because weights, context, subsets, and targets are shared, the comparison isolates the aggregation step (for the CNP, the point estimate also differs). It tests the mechanism and is not a comparison with CausalPFN. It covers six subset selection rules for the CNP and three for the latent NP, plus $s=4$ for the CNP with uniformly drawn subsets. We call each of the 60 combinations of trained model, rule, subgroup size, and $s$ a \emph{cell} (Appendix~\ref{app:protocol}). The \emph{head} experiment repeats the fixed-weight comparison with the heads on the six benchmarks, on the datasets and uniformly drawn subsets of the benchmark experiment. The \emph{variance-ratio} experiment compares coverage with Proposition~\ref{prop:design} for every model, rule, and subset size.

\paragraph{Metrics.} We report coverage of nominal $95\%$ intervals, estimated over tasks and subsets as in Appendix~\ref{app:protocol}, and $\hat k^*$, the subset size at which observed coverage first falls below nominal. Because $m_S$ and $\rho_S$ require the true effects, we use them only to explain the results.

\subsection{Results}
\label{sec:results}

\paragraph{Benchmarks.}\label{sec:benchmarks} Table~\ref{tab:legacy} in Appendix~\ref{app:additional} gives the full results behind Figure~\ref{fig:benchmarks}. Coverage starts at or above nominal on every benchmark in the figure and falls below it as subsets grow, fastest on ACIC 2016 and slowest on Hill-A, where it falls below nominal only beyond half the sample. Equation~\eqref{eq:intro-coverage} approximates the observed curves but overstates coverage on Hill-B and Kang--Schafer (Appendix~\ref{app:additional}). The error correlation alone does not predict where coverage fails. Hill-A has the largest correlation in the suite ($\rho_S = 0.077$) but is the last to fall below nominal, while Hill-B, with a correlation twenty times smaller, loses coverage at $5.8\%$ of its sample. Equation~\eqref{eq:kstar} accounts for the ordering, since Hill-A's unit-level intervals are far too wide ($m_S = 35$) and Hill-B's only slightly ($m_S = 1.42$). Accurate unit-level estimates do not ensure aggregate coverage either. Although Kang--Schafer has the second-lowest normalized RMSE in the suite ($0.09$), it loses coverage at $2.1\%$ of its sample.

\begin{figure}[t]
\centering
\includegraphics[width=0.94\textwidth]{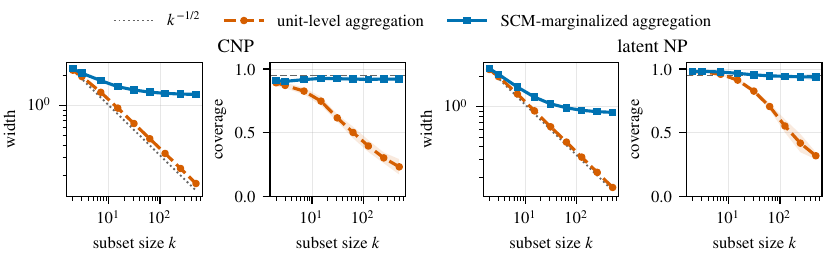}
\caption{Unit-level and SCM-marginalized aggregation from one CNP and one latent NP, with uniformly drawn subsets at $s=1$, showing interval width and coverage against subset size.}
\label{fig:twopath}
\end{figure}

\paragraph{Fixed weights.}\label{sec:twopath} In Figure~\ref{fig:twopath}, the unit-level interval shrinks at close to the $k^{-1/2}$ rate that independence implies, and at the largest subsets it is $7.7$ times narrower than the SCM-marginalized interval for the CNP and $5.5$ times for the latent NP. Unit-level coverage falls with $k$, while SCM-marginalized coverage stays between $0.90$ and $0.98$. Across all $60$ cells, SCM-marginalized coverage at the largest subset has a median of $0.89$ (range $0.70$ to $0.99$), and its lowest value exceeds the highest unit-level value ($0.47$). Coverage is lowest for the CNP at $s=4$, outside its training distribution, and for subsets selected on the true effect, which Corollary~\ref{cor:exact} excludes because $S$ depends on the SCM. The latent NP, whose decoder treats units as independent given $z$, has a median of $0.83$. In a separate comparison without fixed weights, with three models trained at $n=2000$, unit-level coverage is $0.17$ at $k=n$, and the two SCM-marginalized models reach $0.97$ and $0.98$ (Table~\ref{tab:n2000}).

\begin{figure}[t]
\centering
\IfFileExists{figures/cfm_heads_vs_causalpfn.pdf}{\includegraphics[width=\textwidth]{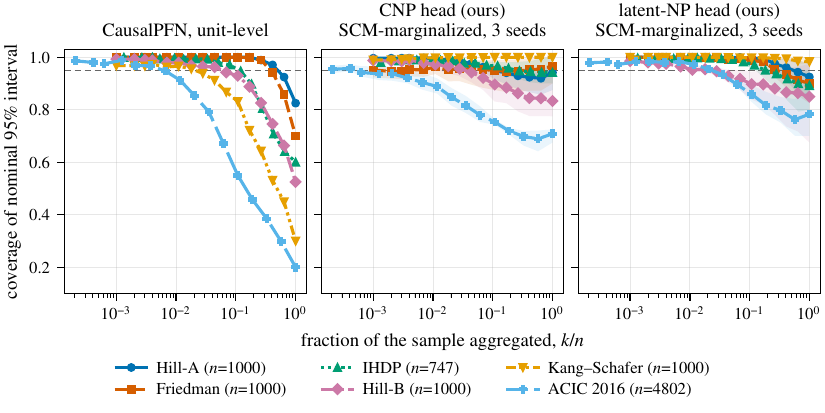}}{\fbox{\parbox{0.9\textwidth}{\centering\ttfamily MISSING FIGURE: figures/cfm\_heads\_vs\_causalpfn.pdf}}}
\caption{Coverage of nominal $95\%$ subset-CATE intervals on the six benchmarks, with subsets drawn uniformly at random. Left: CausalPFN with unit-level aggregation, as in Figure~\ref{fig:benchmarks}. Middle and right: SCM-marginalized aggregation from the CNP and latent NP heads on the frozen CausalPFN, with lines showing the mean over three seeds and bands their range. The heads condition on the whole sample, while the CausalPFN curves come from its public interface, which on ACIC 2016 conditions groups of units on their nearest neighbours (Appendix~\ref{app:additional}). The figure compares aggregation rules on the same benchmarks, not the accuracy of the models.}
\label{fig:heads}
\end{figure}

\paragraph{Heads on CausalPFN.}\label{sec:heads} Figure~\ref{fig:heads} and Table~\ref{tab:heads-coverage} in Appendix~\ref{app:additional} repeat the fixed-weight comparison with the added heads on the six benchmarks. At $k=n$, unit-level aggregation from the same weights covers between $0.10$ and $0.72$, and SCM-marginalized aggregation between $0.71$ and $1.00$. Over the six benchmarks, the full-sample interval contains the true value in $90\%$ of the $240$ datasets for the CNP head and $89\%$ for the latent NP head, against $25\%$ and $52\%$ under unit-level aggregation and $53\%$ for CausalPFN. It is furthest below nominal on ACIC 2016 ($0.71$ and $0.78$) and Hill-B ($0.83$ and $0.85$). The heads do not improve unit-level accuracy. Their normalized RMSE is lower than CausalPFN's on Hill-A, Friedman, and Kang--Schafer and higher on Hill-B ($0.52$ against $0.28$), IHDP ($0.20$ and $0.21$ against $0.11$), and ACIC 2016 ($0.30$ against $0.27$), and at $k=n$ their intervals are wider than CausalPFN's on every benchmark (Table~\ref{tab:heads-moments}).

\begin{figure}[t]
\centering
\includegraphics[width=0.6\textwidth]{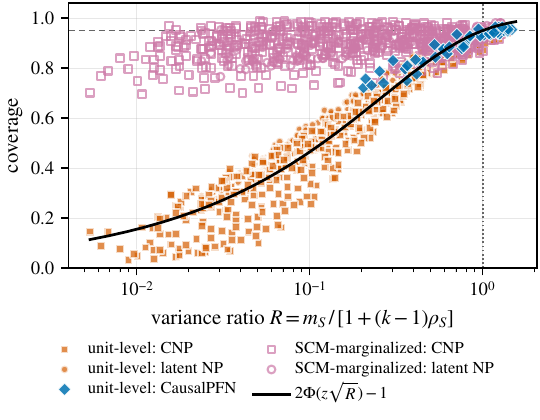}
\caption{Measured coverage against the variance ratio $R$ of Proposition~\ref{prop:design}, pooled across models, values of $s$, subset selection rules, and subset sizes. The curve is Equation~\eqref{eq:intro-coverage}, with $m_S$ and $\rho_S$ measured from unit-level errors. SCM-marginalized points are plotted at the $R$ of unit-level aggregation.}
\label{fig:designeffect}
\end{figure}

\paragraph{Variance ratio.}\label{sec:designeffect} Figure~\ref{fig:designeffect} plots observed coverage against $R$. We compare it with the prediction for $R<1$, since for $R \ge 1$ the predicted coverage is at least nominal. Unit-level coverage closely follows Equation~\eqref{eq:intro-coverage}, with a median absolute error of $0.043$ for our models and $0.021$ for CausalPFN on the synthetic family. Because Equations~\eqref{eq:design} and~\eqref{eq:bias} are exact given $m_S$ and $\rho_S$, the figure mainly tests the Gaussian and fixed-variance assumptions. SCM-marginalized points lie well above the curve (median absolute error $0.315$), since their intervals include the cross terms. On the benchmarks, the prediction for unit-level aggregation from the heads has a median absolute error of $0.029$ for the CNP head and $0.042$ for the latent NP head over the points with $R<1$, with its largest error on IHDP for both heads ($0.23$ and $0.15$; Appendix~\ref{app:additional}).

\section{Discussion and limitations}
\label{sec:limitations}

Our CNP and latent NP are small models trained on a synthetic family; we do not claim they match CausalPFN's accuracy or generality. As heads on a frozen CausalPFN, they recover most of the coverage lost on the benchmarks but not all of it, and their unit-level estimates are less accurate than CausalPFN's on three of the six, most of all on Hill-B and IHDP. Because our family has few parameters, a small latent can capture most of the posterior dependence, but richer priors may require a larger one. On benchmarks, part or all of $\rho_S$ can come from the shared bias term $b_ib_j$ of Proposition~\ref{prop:source}, which no aggregation rule removes.

For existing models, rescaling unit-level intervals to be calibrated did not help in our experiments, and it made coverage worse when they were too wide (Appendix~\ref{app:correction}). Multiplying the aggregate variance by the design effect brings coverage close to nominal, but it requires $\rho_S$ and $m_S$. They could be estimated on simulated tasks, but they vary across our benchmarks by more than an order of magnitude (Table~\ref{tab:legacy}) and depend on the model as well as the subgroup.

\section{Conclusion}
\label{sec:conclusion}

Causal foundation models commonly form intervals for effects averaged over the full sample, the treated units, or a subgroup by combining unit-level intervals as if the errors were independent. Because posterior uncertainty over the SCM makes these errors correlated, the combined variance is too small by the design effect, and coverage falls as the group grows. We observe this on all six standard benchmarks, trace it to the aggregation step with the weights held fixed, and predict its size from $m_S$ and $\rho_S$. Integrating over the SCM jointly for all units recovers the missing covariance. Both models we test that do so give much higher coverage than unit-level aggregation, on the synthetic family and as heads on a frozen CausalPFN on the six benchmarks, though not always at the nominal level.

\subsubsection*{Use of AI tools}
We used large language models to aid and polish writing, for retrieval and discovery of related work, and to draft sections of this paper. All drafted text was reviewed and revised by the authors, all cited work was verified against primary sources, and all theoretical claims and experimental results are the authors' own.

\subsubsection*{Reproducibility statement}
Appendix~\ref{app:dgps} specifies the synthetic family in full, including the coefficient priors and the propensity mechanism, and gives the covariate source, sample size, and generating design for each of the six benchmarks. Appendix~\ref{app:protocol} gives the architectures, the training setup and checkpoint selection for each model, and the task and subset counts behind every coverage estimate we report. Appendix~\ref{app:causalpfn-equiv} identifies the CausalPFN code paths our characterization relies on, at a pinned commit. CausalPFN is used as released, without retraining, with the calibration settings of Section~\ref{sec:expsetup}. Code to reproduce every figure and table, with the recorded configuration and training command for each of our trained models, is available at \url{https://github.com/gabeschulman-nyu/subset-cate-coverage}.

\bibliography{refs}
\bibliographystyle{iclr2027_conference}

\appendix
\let\appendixsection\section
\renewcommand{\section}{\FloatBarrier\appendixsection}
\setcounter{topnumber}{3}
   \renewcommand{\floatpagefraction}{0.9}

\section{Data-generating processes}
\label{app:dgps}

We use a synthetic family so that the true subset-CATE can be computed exactly for any subset and the conditions under which coverage fails can be varied.

\subsection{Synthetic family}

Each task draws a fresh set of coefficients, then $n$ units from the resulting SCM. Covariates are $X_\ell \sim \mathcal{N}(0,1)$ independently for $\ell = 1,\dots,d$ with $d=3$, and each unit is summarized by the feature vector
\begin{equation}
h(X) = [\,X_1,\dots,X_d,\;X_1^2,\dots,X_d^2\,] \in \mathbb{R}^{6},
\end{equation}
which includes squared terms so that both treatment assignment and outcome may depend nonlinearly on the covariates.

Treatment is assigned according to a propensity score $p(X) = \Pr(T=1 \mid X)$, and the outcome is generated as
\begin{align}
T \mid X &\sim \mathrm{Bernoulli}\big(p(X)\big), &
p(X) &= 0.05 + 0.90\,\mathrm{expit}\big(s\,h(X)^\top w_p\big), \\
Y &= h(X)^\top w_b + T\,\tau(X) + u, &
u &\sim \mathcal{N}(0, 0.25^2).
\end{align}
The sharpness $s$ controls how far propensity scores are pushed toward 0 and 1. For any $s$, $p(X)$ stays in $[0.05, 0.95]$, so positivity holds. Our models are trained only at $s=1$, and the CNP is also evaluated at $s=4$, outside its training distribution. We evaluate CausalPFN at $s \in \{1,2,4,6\}$, and the settings above $s=1$ account for $25$ of its $29$ points with $R<1$ in Figure~\ref{fig:designeffect}.

The CATE is
\begin{equation}
\tau(X) = 0.25 + h(X)^\top w_e.
\end{equation}

The covariates enter both the propensity score and the outcome, so they are confounders. Since they are fully observed, $\tau$ is identified.

Since $\tau(X)$ is available in closed form, $\theta_S$ is computed exactly for every task and subset, and targets add no Monte Carlo error to the coverage estimates.

The per-task coefficients are drawn as
\begin{equation}
w_b \sim \mathcal{N}(0, 0.5^2 I_6), \qquad
w_e \sim \mathcal{N}(0, 0.35^2 I_6), \qquad
w_p \sim \mathcal{N}(0, 0.35^2 I_6),
\end{equation}
independently across tasks.

Given the coefficients, units are drawn independently, so they are i.i.d.\ given the SCM $\zeta$ and exchangeable but not independent marginally, as in Section~\ref{sec:definetti}.

\subsection{Benchmarks}

We evaluate CausalPFN on the six standard benchmarks of Table~\ref{tab:benchmarks} and on a heavy-tailed variant of Hill-B. In each, the response surfaces are simulated, so each unit's conditional effect $\mu_1(X_i)-\mu_0(X_i)$ is known. We score intervals against it, not against the difference of the two noisy potential outcomes, and there is no unobserved confounding by construction. IHDP and ACIC 2016 pair the simulated surfaces with covariates from real studies; IHDP uses the \texttt{ihdp\_npci\_1-100} splits and ACIC 2016 the official \texttt{aciccomp2016} generator. The other designs simulate the covariates as well, with $n=1000$ and independent $\mathcal{N}(0,1)$ noise on each potential outcome.
\begin{itemize}[leftmargin=*, itemsep=1pt, topsep=2pt]
\item \textbf{Hill-A.} Twenty-five $\mathcal{N}(0,1)$ covariates, a logistic propensity in the first five, $\mu_0 = X\beta$, and $\mu_1 = X\beta + 4$ with $\beta \sim \mathcal{N}(0, 0.5^2 I)$, so the effect is constant at 4, as in surface A of \citet{hill2011}.
\item \textbf{Hill-B.} The same covariates and propensity, $\mu_0 = \exp\big((X+0.5)\beta_B\big)$ with the entries of $\beta_B$ drawn from $\{0, 0.1, 0.2, 0.3, 0.4\}$ with probabilities $(0.6, 0.1, 0.1, 0.1, 0.1)$, and $\mu_1 = X\gamma + c$ with $\gamma \sim \mathcal{N}(0, 0.3^2 I)$ and $c$ set so that the average effect on the treated is 4. Unlike \citet{hill2011}, $\gamma$ is drawn separately from $\beta_B$. The heavy-tailed variant replaces the Gaussian noise with $t_3$ noise.
\item \textbf{Friedman.} Ten $\mathrm{U}(0,1)$ covariates, a logistic propensity in $2(X_1-0.5)+1.5(X_3-0.5)$, $\mu_0 = 10\sin(\pi X_1X_2)+20(X_3-0.5)^2+10X_4+5X_5$ as in \citet{friedman1991}, and effect $2+X_1-X_2$.
\item \textbf{Kang--Schafer.} Four latent variables $Z \sim \mathcal{N}(0, I)$, of which the estimator sees only the transformed covariates of \citet{kang2007}, a logistic propensity in $Z$, and $\mu_0 = \mu_1 = 210 + 27.4Z_1 + 13.7(Z_2+Z_3+Z_4)$, so the effect is zero for every unit.
\end{itemize} We use splits 1--40 of 100 for IHDP, settings 1--40 of 77 for ACIC 2016 (one replication each), and seeds 1--40 for the simulated designs, for 40 datasets per benchmark. In Hill-A, Friedman, and Kang--Schafer the effect is the same function of the covariates in every dataset, so under the benchmark's own distribution the posterior covariance in Proposition~\ref{prop:source} is zero and all of $\rho_S$ comes from the shared bias term $b_ib_j$.

\section{Training and evaluation protocol}
\label{app:protocol}

\paragraph{Architectures.}
Both models encode each unit's covariates, treatment, and outcome, and mean-pool the encodings into a summary of the sample. The CNP also pools the encodings of the queried subset and decodes a Gaussian for $\theta_S$ from the two summaries in a single pass. The latent NP adds a 16-dimensional Gaussian latent $z$ whose mean and scale are computed from the sample summary, decodes each unit's effect given $z$ as a 64-bin histogram on $[-4, 4]$ in standardized units, and forms $\theta_S$ within each draw of $z$. Every network is a two-layer ReLU MLP of width 128, and neither model uses attention.

\paragraph{Training.}
Both models are trained on the synthetic family of Appendix~\ref{app:dgps} at $s=1$ and $n=500$. The CNP is trained on subsets drawn uniformly at random, with size sampled log-uniformly between $1$ and $n$, by Gaussian negative log-likelihood. The latent NP's loss uses 16 draws of $z$ and has two terms. The first is the cross-entropy of each unit's histogram against a Gaussian-smoothed target at its true effect, averaged over draws. The second, with weight 2, is the negative log of the average over draws of a Gaussian likelihood for $\theta_S$, with $S$ drawn as for the CNP and, within each draw, the Gaussian matching the mean and variance of the average of the decoded unit effects. There is no KL term, and $z$ is drawn from the same distribution at inference, so this is a Monte Carlo likelihood objective similar to that of \citet{foong2020convnp}. The latent NP standardizes each task's outcome and covariates, and the CNP works on the raw scale, in both training and evaluation. We use Adam with learning rate $3 \times 10^{-4}$ and cosine decay to $3 \times 10^{-6}$, batches of 64 tasks, and $30{,}000$ steps. We train three seeds of each model. One seed takes about one GPU-hour for the CNP and under two for the latent NP.

\paragraph{Subset selection rules.}
The fixed-weight experiment uses six rules, each applied to a sample of $n=500$: uniformly drawn subsets of the whole sample; the top 25\% of units by $X_1$; the bottom fraction by true propensity score; a random anchor unit and its nearest neighbours in covariate space; the top fraction by the model's own predicted effect; and the top fraction by the true effect, an oracle rule. For the last four rules, the fraction is $0.10$, $0.25$, or $0.50$. The CNP uses all six rules at $s=1$ and uniformly drawn subsets at $s=4$, for 15 settings; the latent NP uses the first three rules at $s=1$, for 5 settings. With three seeds each, this gives $45$ CNP cells and $15$ latent NP cells. Uniformly drawn subsets vary $k$ at a fixed error correlation, and the other rules select subgroups, as an analyst would. For the latent NP, the two intervals share the unit-level marginals and differ only in the aggregation step; for the CNP, they also differ in the point estimate, since the SCM-marginalized interval is decoded directly.

\paragraph{Checkpoint selection.}
We evaluate the checkpoint with the best score over the full run on 256 held-out validation tasks, using negative log-likelihood for the CNP and per-unit RMSE for the latent NP.

\paragraph{Heads on CausalPFN.}
For the benchmarks, we train both models as heads on the released CausalPFN (version 0.1.4), whose 18.8M parameters are frozen. Each dataset takes one forward pass with all $n$ units as context, and each unit is queried under $t=0$ and $t=1$. A unit's input to the head is CausalPFN's last hidden state under both treatments, which its own output layer reads, together with its unit-level effect mean and standard deviation in units of $\mathrm{sd}(y)$. The CNP head (461k parameters) applies a small network to each unit, averages the outputs over the sample and over $S$, and decodes a Gaussian for $\theta_S/\mathrm{sd}(y)$ from the two averages, $\log k$, and $k/n$. The latent NP head (621k parameters) draws a 16-dimensional $z$ from a Gaussian computed from the sample average and decodes each unit's effect given $z$ as a 64-bin histogram on $[-4, 4]$ in units of $\mathrm{sd}(y)$; effects outside this range, $0.9\%$ of units on Hill-B and $0.3\%$ on ACIC 2016, fall in the end bins. Both are trained with the losses and optimizer above, with gradients clipped at norm $1.0$. Training tasks are generated online and never reused, about $5$M per run at about 20 tasks per step over $250{,}000$ steps, with $n$ log-uniform between 100 and 5000 and between 1 and 60 covariates. Half of the tasks draw covariates from 330 real OpenML tables, excluding the source tables of the benchmarks, and half simulate them. Outcome surfaces are MLPs, trees, or additive functions with heterogeneous effects, propensities lie in $[0.02, 0.98]$, and noise is Gaussian, $t$, or skewed. Each task contributes 32 subsets of log-uniform size, instead of one. Each run keeps the checkpoint with the lowest validation loss on 256 held-out prior tasks, checked every 5000 steps, using the negative log-likelihood of $\theta_S$ for the CNP head and the training objective for the latent NP head, where the latent NP above is selected on per-unit RMSE. We train three seeds of each head, and the same seed gives both heads the same task stream. For the CNP head, unit-level aggregation queries the head with each unit alone as the subset and combines the resulting Gaussians as in Equation~\eqref{eq:sigma}. For the latent NP head, each unit receives its own independent draw of $z$.

\paragraph{Estimating coverage.}
We estimate $\mathcal{C}(k)$ of Equation~\eqref{eq:coverage-def} over $M$
tasks with $L$ subsets sampled per task,
\begin{equation}
\widehat{\mathcal{C}}(k) \;=\; \frac{1}{ML}\sum_{j=1}^{M}\sum_{l=1}^{L}
  \mathbf{1}\big\{\theta_{S_{jl}}\in[\hat{l}_{S_{jl}},\hat{u}_{S_{jl}}]\big\}.
\label{eq:coverage-est}
\end{equation}
Standard errors on $\widehat{\mathcal{C}}(k)$ are clustered by task, since subsets from one task share its error component. The fixed-weight experiment uses 64 tasks with 24 subsets each, the $n = 2000$ comparison 96 tasks with 24 subsets each, and the benchmarks 40 datasets with 48 subsets each.

\paragraph{Measuring $\rho_S$ and $m_S$.}
We compute $\sigma_S^2$ and $c_S$ as the empirical second moments of Equation~\eqref{eq:moments}, from unit-level errors against the analytic targets of Appendix~\ref{app:dgps}, and $m_S$ as the model's average reported unit-level variance divided by $\sigma_S^2$. Both $\rho_S$ and $m_S$ require the true effects, so neither can be computed in practice, and we use them only as diagnostics. On the benchmarks, we pool errors in outcome units over the 40 datasets, compute the moments at each subset size from one subset per dataset, and report the median over sizes. Where error scales differ widely across datasets, the estimate depends on this choice. For the CNP head on Hill-B, for example, $m_S$ is $0.84$ with this estimator and $0.44$ when all 48 subsets per dataset and size are used, and Equation~\eqref{eq:intro-coverage} is correspondingly less reliable there (Appendix~\ref{app:proofs}).

\section{Correcting for the design effect}
\label{app:correction}

Correcting $m_S$ and then $\rho_S$ separates the contribution of unit-level calibration from that of the aggregation step.

Table~\ref{tab:threearm} compares three intervals built from the same unit-level outputs. The \emph{reported} interval is the model's own. The \emph{rescaled} interval divides the unit-level variances by $m_S$, the standard correction for miscalibrated unit-level intervals, and then aggregates at the unit level as before. The \emph{$+$ design effect} interval additionally multiplies the aggregate variance by $1+(k-1)\rho_S$. The CNP row averages its three seeds, each corrected with its own $\rho_S$ and $m_S$.

\begin{table}[tp]
\centering
\footnotesize
\setlength{\tabcolsep}{5pt}
\begin{tabular}{llrrlrr}
\toprule
\textbf{model} & \textbf{data} & \textbf{$m_S$} & \textbf{$k_{\max}$} & \textbf{interval} & \textbf{cov, smallest $k$} & \textbf{cov, $k_{\max}$} \\
\midrule
CausalPFN & $s=1$, all units & 21.90 & 500 & reported & 1.00 & 0.94 \\
 &  &  &  & rescaled & 0.91 & 0.41 \\
 &  &  &  & $+$ design effect & 0.91 & 0.93 \\
\addlinespace[2pt]
CausalPFN & $s=4$, bottom 10\% & 3.63 & 44 & reported & 0.99 & 0.79 \\
 &  &  &  & rescaled & 0.96 & 0.58 \\
 &  &  &  & $+$ design effect & 0.96 & 0.93 \\
\addlinespace[2pt]
CausalPFN & IHDP & 5.43 & 747 & reported & 1.00 & 0.60 \\
 &  &  &  & rescaled & 0.98 & 0.25 \\
 &  &  &  & $+$ design effect & 0.98 & 0.93 \\
\addlinespace[2pt]
CNP (ours) & $s=1$ & 0.93 & 500 & unit-level & 0.89 & 0.23 \\
 &  &  &  & rescaled & 0.91 & 0.24 \\
 &  &  &  & $+$ design effect & 0.92 & 0.93 \\
\bottomrule
\end{tabular}
\caption{Coverage of the three intervals of Appendix~\ref{app:correction} at the smallest and largest subset in each setting. $k_{\max}$ is the largest evaluated subset size within the pool. The \emph{$+$ design effect} interval uses $\rho_S$ and $m_S$ measured against the true effects and is not available in practice.}
\label{tab:threearm}
\end{table}

In the three CausalPFN rows $m_S > 1$, and rescaling lowers coverage at both the smallest subset and $k_{\max}$. Under unit-level aggregation, only over-wide unit-level intervals offset the design effect, and rescaling removes them without correcting the aggregation. The CNP's unit-level intervals are close to calibrated ($m_S = 0.93$), so rescaling changes little and coverage at $k_{\max}$ stays near $0.24$. CausalPFN's own temperature calibration acts like rescaling. Its full-sample interval on ACIC 2016 covers the true value in 18 of 40 datasets without it and in 8 with it.

The design-effect interval removes the dependence on $k$, as Equation~\eqref{eq:bias} implies, but it requires $\rho_S$ and $m_S$ measured against the true effects and so cannot be used in practice.

\section{Subgroup selection}
\label{app:subgroups}

In Table~\ref{tab:subset-selection}, subsets are selected on the propensity score or on covariates, as an analyst querying a subgroup would select them, instead of uniformly at random.

\begin{table}[tp]
\centering
\footnotesize
\setlength{\tabcolsep}{4pt}
\begin{tabular}{lrrrrr}
\toprule
\textbf{subset} & \textbf{$\rho_S$} & \textbf{$m_S$} & \textbf{$k_{\max}$} & \textbf{$1+(k_{\max}\!-\!1)\rho_S$} & \textbf{$\hat k^*$} \\
\midrule
\multicolumn{6}{l}{\itshape $s=1$, propensity stratum} \\
all units (baseline) & 0.034 & 21.90 & 500 & 17.9 & 398 \\
bottom 50\% & 0.069 & 23.23 & 250 & 18.3 & 216 \\
bottom 25\% & 0.120 & 14.10 & 125 & 15.9 & 83 \\
bottom 10\% & 0.224 & 8.07 & 44 & 10.6 & 34 \\
\addlinespace[3pt]
\multicolumn{6}{l}{\itshape $s=1$, nested covariate cell} \\
$X_1>0$ & 0.043 & 22.89 & 177 & 8.7 & --- \\
$\wedge\, X_2>0$ & 0.076 & 23.83 & 89 & 7.7 & --- \\
$\wedge\, X_3>0$ & 0.122 & 26.46 & 31 & 4.7 & --- \\
\addlinespace[3pt]
\multicolumn{6}{l}{\itshape $s=1$, covariate neighbourhood} \\
nearest 50\% & 0.075 & 45.46 & 250 & 19.6 & --- \\
nearest 25\% & 0.143 & 46.08 & 125 & 18.7 & --- \\
nearest 10\% & 0.231 & 36.09 & 44 & 10.9 & --- \\
\addlinespace[3pt]
\multicolumn{6}{l}{\itshape $s=4$, propensity stratum} \\
all units (baseline) & 0.059 & 9.20 & 500 & 30.4 & 93 \\
bottom 50\% & 0.076 & 9.72 & 250 & 20.0 & 89 \\
bottom 25\% & 0.142 & 6.12 & 125 & 18.6 & 31 \\
bottom 10\% & 0.257 & 3.63 & 44 & 12.1 & 8 \\
\addlinespace[3pt]
\multicolumn{6}{l}{\itshape $s=4$, nested covariate cell} \\
$X_1>0$ & 0.069 & 9.41 & 177 & 13.1 & 86 \\
$\wedge\, X_2>0$ & 0.095 & 9.58 & 89 & 9.4 & 58 \\
$\wedge\, X_3>0$ & 0.176 & 8.44 & 31 & 6.3 & --- \\
\bottomrule
\end{tabular}
\caption{Subset selection rules on CausalPFN at $n=500$, with intervals formed by unit-level aggregation, over $64$ tasks with $48$ subsets per task. $k_{\max}$ is the largest evaluated subset size within the pool and $1+(k_{\max}\!-\!1)\rho_S$ the design effect at that size; ``---'' means coverage never falls below nominal within the pool. $\hat k^*$ is defined in Section~\ref{sec:expsetup}.}
\label{tab:subset-selection}
\end{table}

Narrowing a pool raises $\rho_S$ but shrinks $k_{\max}$, so the largest reachable design effect changes much less than either and falls for the narrowest pools.

Narrow propensity strata also lower $m_S$, whereas nested covariate cells leave it roughly unchanged and covariate neighbourhoods raise it. A rule that raises $\rho_S$ without lowering $m_S$ need not reduce coverage below nominal.

At $s=1$, nested covariate cells reach $\rho_S$ comparable to the bottom 25\% propensity stratum and still never fall below nominal, because $m_S$ stays high. At $s=4$, where $m_S$ starts lower, the first two covariate cells do.

\appendixsection{How existing models construct subset-CATE intervals}
\label{app:causalpfn-equiv}

\paragraph{CausalPFN.} Each unit's predictive distribution is a histogram over 1024 bins \citep{causalpfn2025}. The subset-CATE point estimate is the average of the relevant unit-level point estimates. For the credible interval, its code draws one sample from each unit's predictive distribution, averages those samples across the units in $S$, repeats this many times, and takes percentiles of the resulting distribution of averages. For every subset, including the full sample, we apply this procedure to per-unit draws from CausalPFN's sampler.

Let $\hat\tau_i^{(b)}$ denote the $b$-th draw from unit $i$'s predictive distribution, and let $\hat\theta^{(b)} = k^{-1}\sum_{i \in S}\hat\tau_i^{(b)}$ be the corresponding draw of the subset-CATE. Given $\mathcal{D}$, draws for different units are generated independently and share no randomness, so the covariance terms vanish and the variance of each aggregate draw is exactly
\begin{equation}
\Var\big(\hat\theta^{(b)}\big)
= \frac{1}{k^2}\sum_{i \in S}\Var\big(\hat\tau_i^{(b)}\big)
+ \frac{1}{k^2}\sum_{\substack{i,j \in S \\ i \neq j}}\Cov\big(\hat\tau_i^{(b)}, \hat\tau_j^{(b)}\big)
= \frac{1}{k^2}\sum_{i \in S}\hat\sigma_i^2 \;=\; \hat\sigma_S^2 ,
\end{equation}
where $\hat\sigma_i^2$ is the variance of unit $i$'s predictive distribution. This is Equation~\eqref{eq:sigma}, and the percentiles the code returns are Monte Carlo estimates of the quantiles of this distribution.

The interval the code returns is a percentile interval of the sampled aggregate distribution instead of the symmetric interval $\hat\theta_S \pm z_{1-\alpha/2}\hat\sigma_S$. The two coincide when the sampled distribution is symmetric, and the average of $k$ independent draws approaches a Gaussian as $k$ grows, so the discrepancy is largest at small $k$ and negligible at the subset sizes where coverage fails. Whether $\hat\sigma_i^2$ matches unit $i$'s actual squared error is a separate question, which $m_S$ measures.

\paragraph{Do-PFN and CausalFM.} Neither provides an aggregate estimator. The only uncertainty quantification in Do-PFN's repository, which is not part of its estimation interface, builds an interval from the standard error across units and so also assumes independence. CausalFM returns a Gaussian mixture for each unit, and the interval construction in its documentation samples each unit's mixture independently and forms a separate interval for each unit. The only aggregate quantity in its documentation is an average of unit-level point estimates, with no associated credible interval.

\section{Additional results}
\label{app:additional}

\begin{table}[tp]
\centering
\footnotesize
\setlength{\tabcolsep}{4pt}
\begin{tabular}{lrrrrrrr}
\toprule
\textbf{benchmark} & \textbf{$n$} & \textbf{$\rho_S$} & \textbf{$m_S$} & \textbf{nRMSE} & \textbf{cov at $k$=$n$} & \textbf{se} & \textbf{$\hat k^*$ ($\hat k^*/n$)} \\
\midrule
Hill-A & 1000 & 0.0771 & 35.13 & 0.07 & 0.82 & 0.06 & 502 (50\%) \\
Friedman & 1000 & 0.0334 & 11.75 & 0.10 & 0.70 & 0.07 & 383 (38\%) \\
IHDP & 747 & 0.0420 & 5.43 & 0.11 & 0.60 & 0.08 & 85 (11\%) \\
Hill-B heavy-tailed & 1000 & 0.0077 & 2.03 & 0.30 & 0.53 & 0.08 & 62 (6.2\%) \\
Hill-B & 1000 & 0.0038 & 1.42 & 0.28 & 0.53 & 0.08 & 58 (5.8\%) \\
Kang--Schafer & 1000 & 0.0185 & 1.43 & 0.09 & 0.30 & 0.07 & 21 (2.1\%) \\
ACIC 2016 & 4802 & 0.0202 & 1.63 & 0.27 & 0.20 & 0.06 & 30 (0.6\%) \\
\bottomrule
\end{tabular}
\caption{The benchmarks of Figure~\ref{fig:benchmarks} and the heavy-tailed variant of Hill-B, which the figure omits: CausalPFN with its temperature calibration, unit-level aggregation, subsets drawn uniformly at random, 40 datasets per benchmark. $\hat k^*$ is the subset size at which observed coverage first falls below nominal, interpolated in $\log k$, with $\hat k^*/n$ as a percentage of the sample. nRMSE is the root-mean-square error of the per-unit effect estimates divided by the standard deviation of the observed outcome, within each dataset and averaged over datasets. Rows are ordered by $\hat k^*/n$.}
\label{tab:legacy}
\end{table}

\begin{table}[tp]
\centering
\footnotesize
\setlength{\tabcolsep}{4pt}
\begin{tabular}{llrrrr}
\toprule
\textbf{model} & \textbf{aggregation} & \textbf{$k$=5} & \textbf{$k$=50} & \textbf{$k$=500} & \textbf{$k$=2000} \\
\midrule
CNP & unit-level & 0.93 & 0.61 & 0.29 & 0.17 \\
latent NP & SCM-marginalized & 1.00 & 0.98 & 0.97 & 0.97 \\
CNP & SCM-marginalized & 0.95 & 0.98 & 0.98 & 0.98 \\
\bottomrule
\end{tabular}
\caption{Coverage of nominal $95\%$ subset-CATE intervals at $n=2000$ over 96 tasks with 24 subsets each. The two CNP rows are separately trained models. Each model was trained once, with optimization settings that differ from Appendix~\ref{app:protocol} and from each other, so the comparison is not controlled for optimization.}
\label{tab:n2000}
\end{table}

\begin{table}[tp]
\centering
\footnotesize
\setlength{\tabcolsep}{4pt}
\begin{tabular}{lrrrrr}
\toprule
\textbf{$n$} & \textbf{$\rho_S$} & \textbf{$m_S$} & \textbf{$\hat k^*$} & \textbf{$1+(n-1)\rho_S$} & \textbf{cov at $k$=$n$} \\
\midrule
125 & 0.0385 & 7.03 & --- & 6 & 0.95 $\pm$ 0.02 \\
250 & 0.0422 & 14.25 & --- & 12 & 0.96 $\pm$ 0.02 \\
500 & 0.0339 & 21.90 & 398 & 18 & 0.94 $\pm$ 0.03 \\
1000 & 0.0230 & 32.67 & 980 & 24 & 0.95 $\pm$ 0.02 \\
2000 & 0.0230 & 40.08 & 1376 & 47 & 0.93 $\pm$ 0.03 \\
\bottomrule
\end{tabular}
\caption{CausalPFN on the synthetic family of Appendix~\ref{app:dgps} at five sample sizes, with subsets drawn uniformly at random, 64 tasks and 48 subsets per task; coverage at $k=n$ is given $\pm$ one standard error clustered by task. The same CausalPFN model is used throughout, so no training choice varies down the table. $1+(n-1)\rho_S$ is the design effect at the full sample; ``---'' means coverage never falls below nominal. $\hat k^*$ is defined in Section~\ref{sec:expsetup}.}
\label{tab:nsweep}
\end{table}

\paragraph{Sample size.} Whether $\rho_S$ falls with $n$ depends on the prior. When a few shared parameters drive the effect, as in our synthetic family, errors on different units stay correlated as the posterior concentrates, so $\rho_S$ need not fall at all. If it fell as $n^{-1}$, the design effect at a fixed fraction of the sample would stay constant in $n$. Table~\ref{tab:nsweep} gives results for CausalPFN on the synthetic family of Appendix~\ref{app:dgps} at five sample sizes.

The decay is far slower than $n^{-1}$. A power law fitted to the five points has exponent $-0.24$, and over the sixteenfold range of $n$ in the table $\rho_S$ falls by a factor of $1.7$. The largest subset grows with $n$ while the correlation barely falls, so the design effect at the full sample, $1+(n-1)\rho_S$, grows from $6$ at $n=125$ to $47$ at $n=2000$.

Coverage at $k=n$ nonetheless stays close to nominal across the table, because $m_S$ grows as well. CausalPFN's per-unit errors shrink faster than the intervals it reports for them, so $m_S$ rises from $7$ to $40$, with a fitted exponent of $0.62$, and the numerator of Equation~\eqref{eq:bias} grows about as fast as the denominator. On the benchmarks of Section~\ref{sec:benchmarks}, $m_S$ lies between $1.42$ and $35.1$, and coverage falls below nominal on every one of them.

\paragraph{Accuracy of the predictions.} The measured $\hat k^*$ is within 15\% of Equation~\eqref{eq:kstar} on Hill-A, Kang--Schafer, and ACIC 2016, 19\% larger on Friedman, and 20\% smaller on IHDP. On Hill-B and its heavy-tailed variant, it is about half the prediction, and Equation~\eqref{eq:intro-coverage} overstates full-sample coverage there ($0.72$ and $0.66$, against $0.53$) and on Kang--Schafer ($0.41$, against $0.30$). One possible reason is that the predictions use moments pooled over the 40 datasets. Coverage is concave in $R$, so variation in $R$ across datasets can lower pooled coverage below the prediction (Appendix~\ref{app:proofs}). On the synthetic family (Table~\ref{tab:nsweep}), coverage stays within sampling error of nominal, so $\hat k^*$ is noisy there.

\paragraph{The heavy-tailed variant.} Hill-B heavy-tailed is omitted from Figure~\ref{fig:benchmarks} for legibility and appears in Table~\ref{tab:legacy}. Replacing Hill-B's Gaussian noise with $t_3$ noise changes little. Coverage falls below nominal at $6.2\%$ of the sample, against $5.8\%$ for Hill-B, and both reach $0.53$ at the full sample.

\begin{table}[tp]
\centering
\footnotesize
\setlength{\tabcolsep}{3pt}
\begin{tabular}{lrrrrr}
\toprule
 & \textbf{CausalPFN} & \multicolumn{2}{c}{\textbf{CNP head}} & \multicolumn{2}{c}{\textbf{latent NP head}} \\
\cmidrule(lr){3-4}\cmidrule(lr){5-6}
\textbf{benchmark} & \textbf{unit-level} & \textbf{unit-level} & \textbf{SCM-marg.} & \textbf{unit-level} & \textbf{SCM-marg.} \\
\midrule
Hill-A & 0.82 & 0.18 {\scriptsize[.12, .20]} & 0.94 {\scriptsize[.88, .97]} & 0.62 {\scriptsize[.57, .70]} & 0.92 {\scriptsize[.85, .97]} \\
Hill-B & 0.53 & 0.30 {\scriptsize[.23, .35]} & 0.83 {\scriptsize[.78, .95]} & 0.42 {\scriptsize[.25, .57]} & 0.85 {\scriptsize[.68, .97]} \\
Friedman & 0.70 & 0.23 {\scriptsize[.20, .25]} & 0.97 {\scriptsize[.90, 1.00]} & 0.68 {\scriptsize[.68, .70]} & 0.90 {\scriptsize[.82, .97]} \\
Kang--Schafer & 0.30 & 0.38 {\scriptsize[.35, .42]} & 1.00 {\scriptsize[1.00, 1.00]} & 0.72 {\scriptsize[.68, .78]} & 0.98 {\scriptsize[.97, 1.00]} \\
IHDP & 0.60 & 0.30 {\scriptsize[.20, .42]} & 0.94 {\scriptsize[.90, 1.00]} & 0.48 {\scriptsize[.47, .50]} & 0.89 {\scriptsize[.80, .95]} \\
ACIC 2016 & 0.20 & 0.10 {\scriptsize[.07, .15]} & 0.71 {\scriptsize[.68, .72]} & 0.18 {\scriptsize[.17, .20]} & 0.78 {\scriptsize[.70, .88]} \\
\addlinespace[2pt]
covered, of 240 & 126 & 59 {\scriptsize[54, 68]} & 216 {\scriptsize[211, 222]} & 125 {\scriptsize[117, 131]} & 213 {\scriptsize[207, 217]} \\
\bottomrule
\end{tabular}
\caption{Coverage of nominal $95\%$ intervals for the full-sample average ($k=n$) on the six benchmarks, 40 datasets each. CausalPFN's column is the full-sample column of Table~\ref{tab:legacy}. The heads' columns give the mean over three seeds, with the range of the per-seed values in brackets. Standard errors clustered by dataset are at most $0.06$ for the heads. The last row counts datasets whose full-sample interval contains the true value.}
\label{tab:heads-coverage}
\end{table}

\begin{table}[tp]
\centering
\footnotesize
\setlength{\tabcolsep}{4pt}
\begin{tabular}{llrrrrr}
\toprule
\textbf{benchmark} & \textbf{model} & \textbf{$m_S$} & \textbf{$\rho_S$} & \textbf{nRMSE} & \textbf{width, own} & \textbf{width, unit-level} \\
\midrule
Hill-A & CausalPFN & 35.13 & 0.0771 & 0.07 & --- & 0.055 \\
 & CNP head & 2.83 & 0.2367 & 0.06 & 0.092 & 0.012 \\
 & latent NP head & 24.19 & 0.1613 & 0.06 & 0.103 & 0.038 \\
\addlinespace[2pt]
Hill-B & CausalPFN & 1.42 & 0.0038 & 0.28 & --- & 0.047 \\
 & CNP head & 0.84 & 0.0183 & 0.52 & 0.184 & 0.050 \\
 & latent NP head & 0.97 & 0.0122 & 0.52 & 0.198 & 0.068 \\
\addlinespace[2pt]
Friedman & CausalPFN & 11.75 & 0.0334 & 0.10 & --- & 0.041 \\
 & CNP head & 1.28 & 0.0618 & 0.07 & 0.076 & 0.010 \\
 & latent NP head & 13.19 & 0.0708 & 0.07 & 0.067 & 0.033 \\
\addlinespace[2pt]
Kang--Schafer & CausalPFN & 1.43 & 0.0185 & 0.09 & --- & 0.013 \\
 & CNP head & 2.85 & 0.0393 & 0.05 & 0.069 & 0.010 \\
 & latent NP head & 13.65 & 0.0382 & 0.05 & 0.050 & 0.026 \\
\addlinespace[2pt]
IHDP & CausalPFN & 5.43 & 0.0420 & 0.11 & --- & 0.055 \\
 & CNP head & 3.90 & 0.0554 & 0.20 & 0.192 & 0.040 \\
 & latent NP head & 8.84 & 0.0942 & 0.21 & 0.221 & 0.084 \\
\addlinespace[2pt]
ACIC 2016 & CausalPFN & 1.63 & 0.0202 & 0.27 & --- & 0.019 \\
 & CNP head & 0.96 & 0.0261 & 0.30 & 0.093 & 0.016 \\
 & latent NP head & 2.46 & 0.0181 & 0.30 & 0.111 & 0.029 \\
\bottomrule
\end{tabular}
\caption{Unit-level calibration, error correlation, accuracy, and interval width on the six benchmarks. $m_S$ and $\rho_S$ use the estimator of Appendix~\ref{app:protocol}, from each model's unit-level means and standard deviations (for the CNP head, the head queried with each unit alone), and nRMSE is defined as in Table~\ref{tab:legacy}. CausalPFN's $m_S$, $\rho_S$, and nRMSE are those of Table~\ref{tab:legacy}. Widths are mean widths of the full-sample interval in units of $\mathrm{sd}(y)$: \emph{own} is the head's SCM-marginalized interval, and \emph{unit-level} is unit-level aggregation from the same model. CausalPFN's width is computed with the whole sample as context, at the temperatures fitted for Table~\ref{tab:legacy}. Heads: mean over three seeds.}
\label{tab:heads-moments}
\end{table}

\begin{figure}[tp]
\centering
\IfFileExists{figures/cfm_fixed_weight_cnp.pdf}{\includegraphics[width=\textwidth]{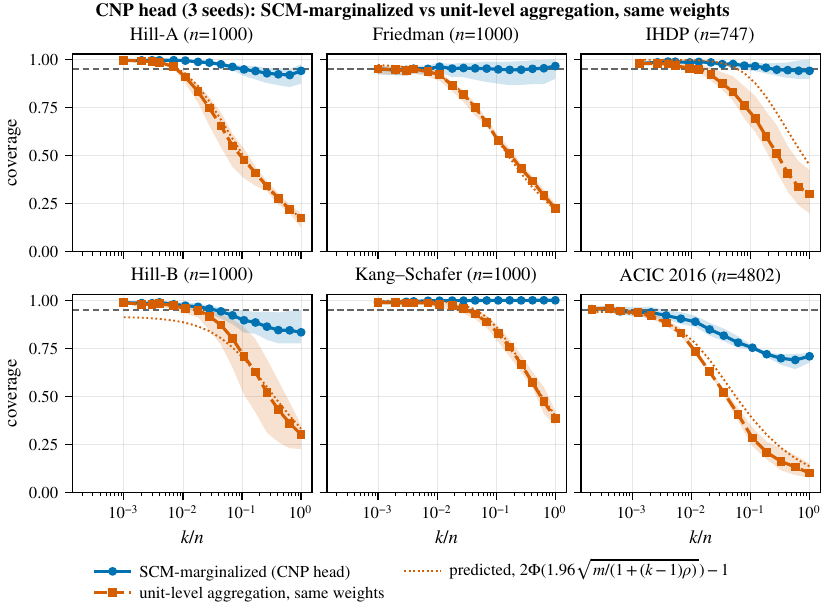}}{\fbox{\parbox{0.9\textwidth}{\centering\ttfamily MISSING FIGURE: figures/cfm\_fixed\_weight\_cnp.pdf}}}
\caption{Fixed-weight comparison for the CNP head on the six benchmarks, with subsets drawn uniformly at random: SCM-marginalized aggregation and unit-level aggregation from the same weights. Lines show the mean over three seeds and bands their range. Dotted lines average the three seeds' predictions from Equation~\eqref{eq:intro-coverage} for unit-level aggregation, each with $m_S$ and $\rho_S$ measured once per benchmark.}
\label{fig:heads-cnp}
\end{figure}

\begin{figure}[tp]
\centering
\IfFileExists{figures/cfm_fixed_weight_latent.pdf}{\includegraphics[width=\textwidth]{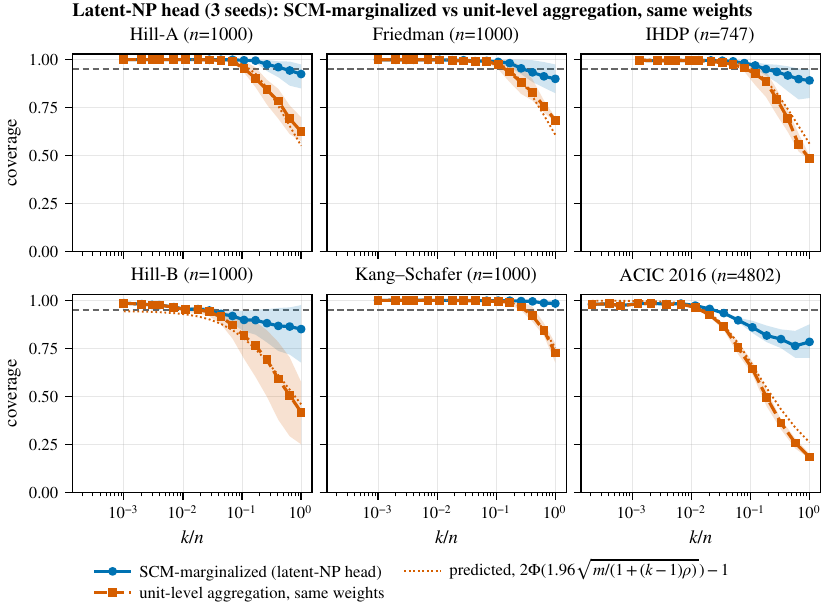}}{\fbox{\parbox{0.9\textwidth}{\centering\ttfamily MISSING FIGURE: figures/cfm\_fixed\_weight\_latent.pdf}}}
\caption{As Figure~\ref{fig:heads-cnp}, for the latent NP head.}
\label{fig:heads-latent}
\end{figure}

\paragraph{Heads on the benchmarks.} Tables~\ref{tab:heads-coverage} and~\ref{tab:heads-moments} and Figures~\ref{fig:heads-cnp} and~\ref{fig:heads-latent} give the results behind Section~\ref{sec:heads}. The per-seed range of SCM-marginalized coverage at $k=n$ is widest for the latent NP head on Hill-B ($0.68$ to $0.97$) and ACIC 2016 ($0.70$ to $0.88$). Equation~\eqref{eq:intro-coverage} follows unit-level coverage closely on Hill-A, Friedman, and Kang--Schafer, with a largest error of at most $0.09$ for either head. Its largest errors are on IHDP ($0.23$ for the CNP head and $0.15$ for the latent NP head) and, for the CNP head, on Hill-B ($0.12$), the two benchmarks on which $m_S$ depends most on the estimator (Appendix~\ref{app:protocol}).

\paragraph{Context on ACIC 2016.} Figure~\ref{fig:benchmarks} and Table~\ref{tab:legacy} use CausalPFN's public interface, which on ACIC 2016 conditions each window of units on its nearest neighbours instead of the whole sample. Re-scored with the whole sample as context, as for the heads, and at the temperatures fitted for Figure~\ref{fig:benchmarks}, CausalPFN's full-sample coverage on ACIC 2016 is $0.25$ instead of $0.20$. On the other benchmarks the two agree to within $0.03$.

\appendixsection{Proofs}
\label{app:proofs}

Throughout, $\E$, $\Var$, $\Cov$, and $\Pr$ are over $\zeta\sim\pi$, $\mathcal{D}\mid\zeta$, and $S$, as in Equation~\eqref{eq:coverage-def}. All errors have finite second moments, and $b_i := \E[e_i \mid \mathcal{D}]$. We say that $S$ is \emph{selected on the data} if $S$ is conditionally independent of $(\tau(X_i;\zeta))_{i=1}^n$ given $\mathcal{D}$. Uniformly drawn subsets and subsets chosen by covariates or by the model's own estimates are selected on the data. In our synthetic family, subsets chosen by the true propensity score are as well, because $w_p$ is independent of $(w_b, w_e)$ under the prior and enters a separate factor of the likelihood, so the two remain independent given $\mathcal{D}$. Subsets chosen by the true effect are not.

\paragraph{Proof of Proposition~\ref{prop:design}.}
Since $\bar e_S = k^{-1}\sum_{i\in S}e_i$,
\[
k^2\,\bar e_S^2 \;=\; \sum_{i\in S}e_i^2 \;+\; \sum_{\substack{i,j\in S\\ i\neq j}}e_ie_j .
\]
Taking expectations and using Equation~\eqref{eq:moments} gives $k^2\E[\bar e_S^2] = k\sigma_S^2 + k(k-1)c_S$, which is Equation~\eqref{eq:design}. By Equation~\eqref{eq:sigma}, $\E[\hat\sigma_S^2] = k^{-1}\E[\bar{\hat\sigma}_S^2] = k^{-1}m_S\sigma_S^2$, and dividing by Equation~\eqref{eq:design} gives Equation~\eqref{eq:bias}. If $\hat\sigma_S^2$ takes the same value $v_k$ for every query of size $k$, then $v_k = \E[\hat\sigma_S^2] = R\,\E[\bar e_S^2]$, and $\theta_S\in\hat\theta_S\pm z_{1-\alpha/2}\sqrt{v_k}$ if and only if $|\bar e_S|\le z_{1-\alpha/2}\sqrt{R\,\E[\bar e_S^2]}$, which is Equation~\eqref{eq:coverage}. If in addition $\bar e_S$ is mean-zero Gaussian, the standardized error is standard normal and $\mathcal{C}(k) = 2\Phi(z_{1-\alpha/2}\sqrt{R})-1$. \qed

The same expansion, with $\E[(\sum_{i\in S}e_i)^2]\ge 0$ and $(\sum_{i\in S}e_i)^2\le k\sum_{i\in S}e_i^2$, gives $-1/(k-1)\le\rho_S\le 1$ for $k\ge 2$.

\paragraph{Coverage when the reported variance varies.}
Suppose that, given $(\mathcal{D},S)$, $\bar e_S$ is Gaussian with mean zero and variance $v_{\mathcal{D},S}$, as for the posterior mean when $S$ is selected on the data and the posterior of $\theta_S$ is Gaussian. Then $\Pr(\theta_S\in[\hat l_S,\hat u_S]\mid\mathcal{D},S) = f(R_{\mathcal{D},S})$ with $f(r) = 2\Phi(z_{1-\alpha/2}\sqrt{r})-1$ and $R_{\mathcal{D},S} = \hat\sigma_S^2/v_{\mathcal{D},S}$, so $\mathcal{C}(k) = \E[f(R_{\mathcal{D},S})]$. The ratio $R$ of Equation~\eqref{eq:bias} is the average of $R_{\mathcal{D},S}$ weighted by $v_{\mathcal{D},S}/\E[v_{\mathcal{D},S}]$. Since $f$ is concave, $\mathcal{C}(k)\le f(R)$ when $R_{\mathcal{D},S}$ is independent of $v_{\mathcal{D},S}$, and Equation~\eqref{eq:intro-coverage} then overstates coverage. Without this independence the direction can reverse. If $\hat\sigma_S$ is fixed and only $v_{\mathcal{D},S}$ varies, coverage is convex in $v_{\mathcal{D},S}$ wherever it is below $2\Phi(\sqrt{3})-1\approx 0.92$, and pooling then raises it above $f(R)$.

\paragraph{Proof of Proposition~\ref{prop:source}.}
Given $\mathcal{D}$, each $\hat\tau_i$ and each $X_i$ is fixed, and $\tau_i = \tau(X_i;\zeta)$ with $\zeta\sim\pi(\zeta\mid\mathcal{D})$. Since $e_i$ differs from $-\tau_i$ by a quantity that is fixed given $\mathcal{D}$, $\Cov(e_i,e_j\mid\mathcal{D}) = \Cov(\tau_i,\tau_j\mid\mathcal{D})$, which is Equation~\eqref{eq:cS-definetti}. If $S$ is selected on the data, the law of the unit effects given $(\mathcal{D},S)$ equals their law given $\mathcal{D}$, so for $i,j\in S$,
\[
\E[e_ie_j\mid\mathcal{D},S] \;=\; \Cov_{\zeta\sim\pi(\zeta\mid\mathcal{D})}\big(\tau(X_i;\zeta),\tau(X_j;\zeta)\big) + b_ib_j .
\]
Averaging over pairs of distinct units in $S$ and then over $(\mathcal{D},S)$ gives $c_S$. The posterior mean under $\pi$ has $b_i = 0$. \qed

\paragraph{Aggregation for a general model.} For a model with joint predictive distribution $Q_S(\cdot\mid\mathcal{D})$ for $(\tau_i)_{i\in S}$ and marginals $Q_i(\cdot\mid\mathcal{D})$, unit-level aggregation reports the law of $k^{-1}\sum_{i\in S}\tilde\tau_i$ under $(\tilde\tau_i)_{i\in S}\sim\bigotimes_{i\in S}Q_i$, and SCM-marginalized aggregation reports its law under $(\tilde\tau_i)_{i\in S}\sim Q_S$. For the exact posterior, $Q_S$ is the law of $(\tau(X_i;\zeta))_{i\in S}$ under $\zeta\sim\pi(\zeta\mid\mathcal{D})$, and the two are Equations~\eqref{eq:intro-unit} and~\eqref{eq:intro-scm} with $p(\tau_i\mid\zeta)$ the point mass at $\tau(X_i;\zeta)$. For the latent NP, $Q_S$ is the distribution in Equation~\eqref{eq:latentnp}. The CNP outputs a Gaussian for $\theta_S$ directly (Proposition~\ref{prop:cnp}), and a model that returns only marginals, such as CausalPFN, supports only unit-level aggregation.

\begin{corollary}[Exact posterior]
\label{cor:exact}
Let $\hat\tau_i = \E[\tau_i\mid\mathcal{D}]$ and $\hat\sigma_i^2 = \Var(\tau_i\mid\mathcal{D})$ under $\pi$, and let $S$ be selected on the data. Then (i) unit-level aggregation has $m_S = 1$ and $R = 1/(1+(k-1)\rho_S)$; (ii) SCM-marginalized aggregation, which reports $\Var(\theta_S\mid\mathcal{D})$, has $R = 1$ with this variance in place of $\hat\sigma_S^2$; and (iii) if the posterior of $\theta_S$ has a continuous distribution function, its equal-tailed $1-\alpha$ credible interval has $\mathcal{C}(k) = 1-\alpha$ for every $k$.
\end{corollary}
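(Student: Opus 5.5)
The plan is to treat the three parts separately. Each reduces to Proposition~\ref{prop:source} and Proposition~\ref{prop:design} once we condition on $(\mathcal{D},S)$ and use that $S$ is selected on the data.

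For (i), the key observation concerns the posterior mean $\hat\tau_i = \E[\tau_i\mid\mathcal{D}]$. For it, $b_i = 0$ and $\E[e_i^2\mid\mathcal{D}] = \Var(\tau_i\mid\mathcal{D}) = \hat\sigma_i^2$. Because $S$ is conditionally independent of the unit effects given $\mathcal{D}$, conditioning additionally on $S$ leaves these moments unchanged, so $\E[e_i^2\mid\mathcal{D},S] = \hat\sigma_i^2$ for $i\in S$. Averaging over $i\in S$ and then over $(\mathcal{D},S)$ gives $\E[\bar{\hat\sigma}_S^2] = \sigma_S^2$, that is, $m_S = 1$. Substituting this into Equation~\eqref{eq:bias} gives $R = 1/(1+(k-1)\rho_S)$.

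For (ii), I would write $\bar e_S = \hat\theta_S - \theta_S$ with $\hat\theta_S = \E[\theta_S\mid\mathcal{D}]$, which by linearity is the SCM-marginalized point estimate. Given $(\mathcal{D},S)$, selection on the data gives $\E[\bar e_S^2\mid\mathcal{D},S] = \Var(\theta_S\mid\mathcal{D})$. This is exactly the reported variance of Equation~\eqref{eq:scmdraw}, equivalently Equation~\eqref{eq:postvar}. Taking expectations then gives $\E[\text{reported}] = \E[\bar e_S^2]$, so $R = 1$ in the sense of Equation~\eqref{eq:bias}.

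For (iii), I would argue conditionally rather than through $R$, since the Gaussian and fixed-variance assumptions of Proposition~\ref{prop:design} are not available. Let $F_{\mathcal{D},S}$ be the posterior distribution function of $\theta_S$ given $\mathcal{D}$. By selection on the data, this is also the conditional law of the true $\theta_S$ given $(\mathcal{D},S)$. The equal-tailed interval is $[F^{-1}(\alpha/2),F^{-1}(1-\alpha/2)]$. By continuity, $F(\theta_S)$ is uniform conditionally on $(\mathcal{D},S)$ (the probability integral transform), so the conditional probability that $\theta_S$ lies in the interval is exactly $1-\alpha$. Integrating over $(\mathcal{D},S)$, with $\zeta$ drawn from the same $\pi$ that defines the posterior, gives $\mathcal{C}(k)=1-\alpha$.

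The main obstacle is being careful about the role of $S$. The conditional law of the effects given $(\mathcal{D},S)$ must coincide with the posterior given $\mathcal{D}$, and this is exactly where selection on the data is needed; oracle rules that select on the true effect break it. A second point of care is that the evaluation distribution must equal the prior used for the posterior. The boundary case of quantile attainment in (iii) is handled by the continuity assumption.
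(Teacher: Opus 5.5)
Your proposal is correct and follows the paper's proof essentially step for step. In each part you condition on $(\mathcal{D},S)$, use selection on the data to identify the conditional law of the unit effects with the posterior given $\mathcal{D}$, and then average over $(\mathcal{D},S)$; your only addition is in (iii), where the probability integral transform spells out the paper's one-line claim of conditional coverage $1-\alpha$ and shows where continuity is used.
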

\begin{proof}
(i) For $i\in S$, $\E[e_i^2\mid\mathcal{D},S] = \Var(\tau_i\mid\mathcal{D}) = \hat\sigma_i^2$, so $\E[\bar{\hat\sigma}_S^2] = \sigma_S^2$ and $m_S = 1$, and Proposition~\ref{prop:design} gives $R$. (ii) The point estimate is $\E[\theta_S\mid\mathcal{D}]$ and the reported variance is $\Var(\theta_S\mid\mathcal{D})$, so $\E[\bar e_S^2] = \E[\Var(\theta_S\mid\mathcal{D})]$, which equals the expected reported variance. (iii) Given $(\mathcal{D},S)$, $\theta_S$ follows its posterior, so the interval contains it with conditional probability $1-\alpha$. Averaging over $(\mathcal{D},S)$ gives the claim.
\end{proof}
Part (iii) concerns coverage averaged over $\pi$, as in Equation~\eqref{eq:coverage-def}, and does not imply coverage at a fixed SCM. In our synthetic family the outcome is linear in Gaussian coefficients with known noise variance, and $w_p$ enters a separate factor of the likelihood, so the posterior of $(\tau_i)_{i\in S}$ is Gaussian and the interval in (iii) is $\hat\theta_S\pm z_{1-\alpha/2}\Var(\theta_S\mid\mathcal{D})^{1/2}$. Because $s$ enters only the propensity factor, this posterior is the same function of $\mathcal{D}$ at every $s$, so the corollary applies at $s=4$ as well, and the population minimizer of Proposition~\ref{prop:cnp} at $s=1$ is also the exact posterior at $s=4$.

\paragraph{Marginals do not determine the joint.}
Let the unit effects have marginals $\mathcal{N}(\mu_i,\omega^2)$, $i\in S$. For every $\rho\in[0,1]$, the Gaussian with these marginals and all pairwise correlations $\rho$ gives $\Var(k^{-1}\sum_{i\in S}\tau_i) = \omega^2\{1+(k-1)\rho\}/k$. This ranges from $\omega^2/k$ under independence to $\omega^2$ at $\rho=1$, the largest value any joint distribution with these marginals allows, since the standard deviation of a sum is at most the sum of the standard deviations. The marginals are thus consistent with every design effect between $1$ and $k$, and no function of them alone equals $\Var(\theta_S\mid\mathcal{D})$ for every joint posterior. A rule that also uses the model's prior can sometimes do better. In a linear-Gaussian model with known features, for instance, the marginal variances at enough units determine the posterior covariance of the coefficients. The statement in Section~\ref{sec:definetti} concerns rules that use the marginals alone.

\begin{proposition}[Bayes-optimal Gaussian output]
\label{prop:cnp}
Let training tasks draw $\zeta\sim\pi_0$, $\mathcal{D}\mid\zeta$, and a subset $S$ selected on the data, with $\E_{\pi_0}[\theta_S^2]<\infty$ and $\E_{\pi_0}\lvert\log\Var_{\pi_0}(\theta_S\mid\mathcal{D})\rvert<\infty$. Among all measurable maps $(\mathcal{D},S)\mapsto(\mu,v)$, the expected loss $\E_{\pi_0}\big[-\log\mathcal{N}\big(\theta_S;\mu(\mathcal{D},S),v(\mathcal{D},S)\big)\big]$ is minimized by $\mu = \E_{\pi_0}[\theta_S\mid\mathcal{D}]$ and $v = \Var_{\pi_0}(\theta_S\mid\mathcal{D})$, uniquely up to sets of probability zero under the training distribution of $(\mathcal{D},S)$.
\end{proposition}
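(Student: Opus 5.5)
The plan is to condition on $(\mathcal{D},S)$ and minimize pointwise, then integrate. Write the loss as
\[
-\log\mathcal{N}(\theta_S;\mu,v) \;=\; \tfrac12\log(2\pi v) + \frac{(\theta_S-\mu)^2}{2v}.
\]
By the tower property, the expected loss equals $\E_{\pi_0}[L(\mathcal{D},S;\mu,v)]$, where
\[
L(\mathcal{D},S;\mu,v) \;:=\; \E_{\pi_0}\big[-\log\mathcal{N}(\theta_S;\mu,v)\mid\mathcal{D},S\big].
\]
The map $(\mu,v)$ is measurable in $(\mathcal{D},S)$, so it is fixed inside this conditional expectation. Since $S$ is selected on the data, the conditional law of $\theta_S$ given $(\mathcal{D},S)$ is its law given $\mathcal{D}$ with $S$ plugged in. The argument is the same as in the proof of Proposition~\ref{prop:source}: the unit effects $(\tau(X_i;\zeta))_i$ are conditionally independent of $S$ given $\mathcal{D}$, and $\theta_S$ is a fixed linear function of them once $S$ is known. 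Write $m := \E_{\pi_0}[\theta_S\mid\mathcal{D}]$ and $s^2 := \Var_{\pi_0}(\theta_S\mid\mathcal{D})$. Both are finite a.s.\ by $\E_{\pi_0}[\theta_S^2]<\infty$.

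The bias–variance identity then gives
\[
L \;=\; \tfrac12\log(2\pi v) + \frac{s^2+(m-\mu)^2}{2v}.
\]
For fixed $v>0$ this is minimized uniquely at $\mu=m$. At $\mu=m$, the function $v\mapsto \tfrac12\log v + s^2/(2v)$ has derivative $(v-s^2)/(2v^2)$. It is therefore strictly decreasing on $(0,s^2)$ and strictly increasing on $(s^2,\infty)$, so it has a unique minimizer $v=s^2$ whenever $s^2>0$. The minimum value is $\tfrac12\log(2\pi e s^2)$. The hypothesis $\E|\log s^2|<\infty$ ensures $s^2>0$ a.s., and it also ensures the optimal expected loss $\E[\tfrac12\log(2\pi e\, s^2)]$ is finite. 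This matters because otherwise every competitor could tie at $-\infty$ or $+\infty$ and uniqueness would be vacuous.

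For uniqueness, take any measurable $(\mu,v)$ with finite expected loss. Then $L(\mathcal{D},S;\mu,v)-L(\mathcal{D},S;m,s^2)\ge 0$ pointwise, with equality iff $(\mu,v)=(m,s^2)$ at that point. If the two maps differed on a set of positive probability, the integrated gap would be strictly positive, so the optimum is unique up to null sets of the training law of $(\mathcal{D},S)$.

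The step that needs the most care is the reduction of the conditional law given $(\mathcal{D},S)$ to that given $\mathcal{D}$, together with the integrability bookkeeping. Since $\theta_S$ depends on $S$, "selected on the data" must be used to identify $\E[\theta_S\mid\mathcal{D},S]$ with $\E_{\pi_0}[\theta_S\mid\mathcal{D}]$ evaluated at the realized $S$. This is the same conditional-independence argument as in Proposition~\ref{prop:source}, and I would cite it there. The integrability bookkeeping consists of checking that the pointwise comparison can be integrated without $\infty-\infty$ issues. I would handle this by restricting to competitors with finite expected loss, since any competitor with infinite loss is trivially non-optimal.
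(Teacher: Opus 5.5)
Your proposal is correct and follows the paper's proof. You minimize the conditional Gaussian negative log-likelihood pointwise in $(\mathcal{D},S)$ via the bias--variance decomposition, then use selection on the data to identify the conditional moments with the posterior moments given $\mathcal{D}$. Beyond the paper, you spell out the role of the integrability hypotheses: they give $s^2>0$ almost surely and a finite optimal loss, so the integrated gap is well defined for uniqueness, which the paper leaves implicit.
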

\begin{proof}
Given $(\mathcal{D},S)$, the conditional expected loss equals $\tfrac12\log v + \{\Var(\theta_S\mid\mathcal{D},S) + (\E[\theta_S\mid\mathcal{D},S]-\mu)^2\}/(2v)$ plus a constant, which is uniquely minimized at $\mu = \E[\theta_S\mid\mathcal{D},S]$ and $v = \Var(\theta_S\mid\mathcal{D},S)$. Because $S$ is selected on the data, these equal the posterior mean and variance of $\theta_S$ given $\mathcal{D}$. Minimizing the conditional loss for each $(\mathcal{D},S)$ minimizes its expectation.
\end{proof}
As in \citet[Insight~1]{muller2022}, the training loss is an expected cross-entropy to the posterior, and restricted to Gaussian outputs its minimizer matches the posterior's first two moments. The result describes the population minimizer. A trained network approximates it well only for tasks like those seen in training, which for our CNP include subsets of every size but not datasets at $s=4$.

\end{document}